\PassOptionsToPackage{table}{xcolor} 
\documentclass[11pt]{article} 
\PassOptionsToPackage{numbers}{natbib}
\usepackage{arxiv}
\usepackage{natbib}

\usepackage{times}
\usepackage{wrapfig}
\usepackage{algpseudocode}
\usepackage{algorithm}
\usepackage{graphicx}      
\usepackage{adjustbox}     
\usepackage{caption}       
\usepackage{amsmath}       
\usepackage{amssymb}       
\usepackage{placeins}
\usepackage{multicol}
\usepackage{stfloats}
\usepackage{flafter}
\usepackage{latexsym}
\usepackage{listings}
\usepackage{tcolorbox}
\usepackage{soul}
\usepackage{xcolor}
\usepackage{bm}
\usepackage{xurl}
\usepackage{microtype}
\usepackage[T1]{fontenc}

\usepackage[utf8]{inputenc}
\usepackage{xcolor}
\usepackage{microtype}
\usepackage[normalem]{ulem}
\usepackage{multirow}
\usepackage{amsthm}

\usepackage{amsmath,amsfonts,bm}

\def\eqref#1{equation~\ref{#1}}

\def\1{\bm{1}}

\DeclareMathAlphabet{\mathsfit}{\encodingdefault}{\sfdefault}{m}{sl}
\SetMathAlphabet{\mathsfit}{bold}{\encodingdefault}{\sfdefault}{bx}{n}

\usepackage{url}
\usepackage{float}
\usepackage{makecell}
\usepackage{graphicx}
\usepackage{subfig}
\usepackage[para,online,flushleft]{threeparttable}
\usepackage{booktabs}
\newsavebox{\distributiontable}
\newsavebox{\miatable}
\newsavebox{\multiroundtables}
\newsavebox{\clippingtablebox}

\newcommand{\tablebodyfont}{\fontsize{7}{8.5}\selectfont}

\usepackage[capitalise]{cleveref}
\usepackage{etoc}

\newtheorem*{theorem*}{Theorem}

\newtheorem{lemma}{Lemma}
\newtheorem*{lemma*}{Lemma}

\newtheorem*{corollary*}{Corollary}

\sethlcolor{yellow} 
\newcommand{\hlgreen}[1]{%
  {\sethlcolor{green}\hl{#1}}%
}

\title{Private Retrieval Augmented Generation via Random Projection}

\author{Dixi Yao\footnotemark[0] \thanks{Correspondence
to: Dixi Yao $<$dixi@uchicago.edu$>$} \quad
Tahseen Rabbani \quad
Tian Li \\University of Chicago 
}

\begin{document}
\raggedbottom

\maketitle
\etocdepthtag.toc{mtchapter}

\begin{abstract}
Retrieval-Augmented Generation (RAG) enhances the capabilities of large language models (LLMs) by querying external structured knowledge. However, it can also introduce privacy risks by leaking sensitive information from the retrieval datastore. We propose a simple method to preserve datastore privacy in RAG systems via random projection. By applying the same projection to both datastore embeddings and query embeddings, our method provably preserves semantic similarity between queries and retrieved items while substantially mitigating data extraction attacks. Across multiple RAG architectures and datasets, we show that this lightweight approach achieves superior retrieval and generation performance compared to prior methods which leverage formal differential privacy (DP) guarantees, while exhibiting comparable empirical privacy under strong attack models. Our results for the first time suggest that random projection can serve as a competitive and practical baseline for privacy-preserving RAG systems. 
\end{abstract}

\section{Introduction}
Retrieval-Augmented Generation (RAG)~\citep{khandelwal2019generalization, lewis2020retrieval} enhances large language models (LLMs) by incorporating information retrieved from external knowledge. However, recent studies show that carefully crafted prompts~\citep{he2025external, wang2025silent, zeng2024good, koga2024privacy} can extract sensitive information such as personal identity information from the external datastore.

Since attack queries can closely resemble legitimate user queries, simple access control may block ordinary users~\citep{he2025external,wang2025silent}. Recent work has begun to explore privacy risks in RAG, with a leading focus on differential privacy (DP)~\citep{koga2024privacy,wu2025private}.
Although DP methods provide formal statistical guarantees, they often degrade output quality (e.g., leading to high perplexity) due to DP's fundamental properties of worst-case guarantees, and are usually complicated to implement in practice. 
To this end, we propose private RAG via random projection to defend against data extraction attacks.

Our key idea is that differential privacy guarantees may be an overkill to prevent state-of-the-art attacks practically, whereas (some specific) random projection matrices naturally preserve pairwise similarities of the inputs in the lower-dimensional space. The extra randomness alters datastore embeddings so attackers may not recover the original text. This enables users to obtain accurate answers from the RAG system while adversaries fail. 
We apply  a single projection step to datastore embeddings in the offline stage and the same matrix to user queries in the online stage.

In this work, we consider two RAG architectures: `KNN-LM'~\citep{khandelwal2019generalization} and `Standard RAG'~\citep{lewis2020retrieval}\footnote{We note that there exist various RAG architectures, and we use the naming `Standard RAG' just to contrast with the less popular KNN-LM architecture.}. In KNN-LM, the system linearly combines the outputs of the language model and the $k$-nearest neighbour outputs (usually organized as a softmax over cosine similarities between queries and the datastore embeddings), and generates next token. In Standard RAG, the system concatenates retrieved texts with the query and feeds them to the LLM. In this work, we focus on the settings where formal privacy guarantees are not desired, and thus evaluate privacy leakage by exploring the success rates of strong, existing attacks empirically. 

In \cref{sec:experiments}, we empirically show that enforcing differential privacy guarantees may not provide better practical protection against data extraction and can instead harm utility. In summary, our RP technique significantly improves utility (over 50\% on average across datasets) over the baseline methods without degrading empirical privacy performance.

\section{Related Works}
\paragraph{Private RAG.} 
Prior work shows that various RAG architectures can reveal sensitive personal data (e.g., emails, phone numbers, and URLs) from the external datastore~\citep{he2025external,zeng2024good, jiang2024rag,jiang2024feedback,di2024pirates}. Existing defenses, such as DP-based sampling and aggregation~\citep{koga2024privacy, grislain2025rag} and synthetic data~\citep{zeng2025mitigating}, reduce efficiency on large datasets and often require extra validation or training. Moreover, \citet{zhao2025does} show that synthetic data alone does not fully prevent memorization or leakage. \citet{he2025mitigating} apply local differential privacy to sensitive content like addresses. However, privacy can involve entire sentences or paragraphs beyond tokens, making simple masking insufficient. \citet{thareja2025dp} mix original and sanitized documents, which requires identifying all private tokens and is impractical for large-scale RAG.

\paragraph{Privacy-Preserving Random Projection.} 
Certain random projection operations are a common technique for reducing dimensionality while preserving some similarity measures with high probability (e.g., based on the Johnson-Lindenstrauss lemma~\cite{johnson1984extensions}). 
Prior works have shown that methods built on top of these projections can achieve differential privacy (DP) for specific applications~\cite{blocki2012johnson, xu2017dppro,li2023differential,ibrahim2024privacy,liu2006random,narimani2025fedrp,kaleli2013privacy,pavlovic2025differential,lee2025optimal}. In this work, we show that random projection can be effective for a variety of RAG scenarios as well. 

\section{Private RAG via Random Projection} \label{sec:alg}
\subsection{Threat Model}
\label{sec:threatmodel}

\begin{wrapfigure}{r}{0.5\columnwidth}
    \centering
    \vspace{-10pt}
    \includegraphics[
        width=\linewidth
    ]{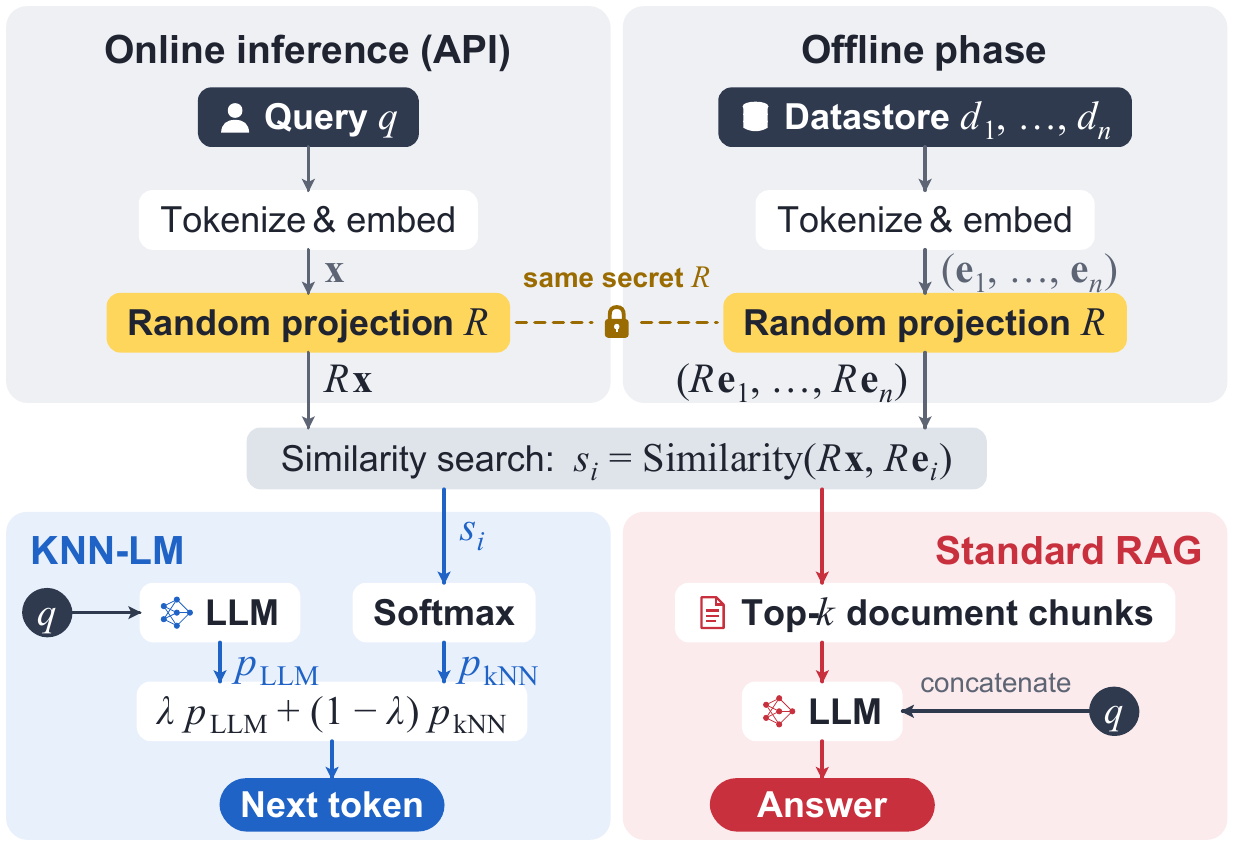}
    \captionsetup{skip=2pt}
    \caption{Overview of Private RAG via RP.}
    \label{fig:arch}
    \vspace{-10pt}
\end{wrapfigure}

As illustrated in \Cref{fig:arch}, our algorithm is compatible with both RAG architectures. In this work, we evaluate our method as an empirical defense against data extraction attacks. We consider a black-box setting in which the attacker has access only to model API responses, while model parameters, the random projection matrix, logits, embeddings, and latent representations remain inaccessible~\citep{sun2026enterpriserag,packowski2024optimizing}. The attacker submits queries to the RAG system using the API service and attempts to extract sensitive information in the datastore from the generated response through one round or multiple rounds. This is a common protocol in modern RAG applications where companies provide consulting service through LLM with an internal datastore while only text answers are exposed to users. The threat model also aligns with the threat scenarios considered in prior work on RAG privacy~\citep{he2025external, wang2025silent, huang2023privacy}. 

In short, we assume the attacker knows that RP is used but does not have direct access to the specific random matrices, with the projection matrix remaining undisclosed. Attackers are free to conduct repeated-query (\Cref{sec:repeat}) or multi-round adaptive attacks (\Cref{sec:multi}). { As network- and system-level attacks are out of scope and access to embeddings is restricted, we do not consider white-box attacks, embedding inversion, projection-matrix inversion, or side-channel attacks requiring side information or access beyond the LLM's text responses.}

\begin{table*}[!b]
\centering
\captionsetup{skip=2pt}
\caption{We focus on sensitive-information leakage and show examples of sensitive and non-sensitive information.}
\captionsetup{skip=2pt}
\label{tab:example_leakage}
{
\renewcommand{\arraystretch}{1.1}
\tablebodyfont
\setlength{\tabcolsep}{2pt}
\begin{tabular}{p{\dimexpr0.18\linewidth-4pt\relax} p{\dimexpr0.41\linewidth-4pt\relax} p{\dimexpr0.41\linewidth-4pt\relax}}
\toprule
Leakage Information & Sensitive Information & Non-Sensitive Information \\
\midrule

URL with Personal Information (e.g., Token ID) &
If you prefer to receive a hardcopy of your pay stub, click on the link below to complete a request form. Please click on the link below.
\hl{http://etcsurvey.corp.enron.com/wi/p0810709/i.asp?r=19690}\newline\hl{\&s=OUULGRTU\&l=9}
&
Suggestions and feedback are welcome at \hlgreen{https://www.reddit.com} \\

Phone &
America Square, London, EC3N 1PQ Tel: \hl{+44 (0)171 839 4444} Fax: \hl{+44 (0)171 839 4444} &
If you have any comments or suggestions contact us at\hlgreen{!  !!!!!!!!!!!!!!!!!!!!!!!!!!!!!!!!!!!!!!!!}\\

Personal Address &
If you have any questions, please do not hesitate to call. Salle E. Yoo Davis Wright \hl{Tremaine One Embarcadero Center, Suite 600 San Francisco, CA 94111}&
The applicant, Mr Richard Owens, is a British national who was born in 1944 and lives in \hlgreen{Queen St. Queen St. Queen St. Queen St.} \\
Name &
The applicant, \hl{Mr Michael Boyce}, is an Irish national, who lives in Mullingar, Ireland. He was represented before the Court by \hl{Mr P.J. Groarke}, a lawyer practising in County Longford, Ireland. &
The applicant, \hlgreen{Mr D.H.}, who was born in 1966 and lives in \hlgreen{H.} \\

\bottomrule
\end{tabular}}
\end{table*}

{To provide a clearer understanding of what constitutes sensitive information, we show examples of leakage with and without sensitive content. In \Cref{tab:example_leakage}, we observe that leakage of sensitive information exposes critical details such as names, addresses, phone numbers, and email addresses. In contrast, non-sensitive information reveals no useful details. For example, a generic URL or repeated mentions of ``Queen St'' are too common and lack identifying information.}

\subsection{Algorithm}
Let $D=\{d_1,\ldots, d_n\}$ be the corpus of $n$ natural language documents (e.g., email threads or case paragraphs). Let $f(\cdot)$ denote the tokenizer and embedding layers, which can be further fine-tuned to map these $d_i$'s (texts) to embeddings. We assume a fixed pre-trained encoder throughout the paper for simplicity. We also assume that an attacker interacts with the RAG system through API interactions and aims to extract data with constructed queries.

\begin{wrapfigure}{r}{0.45\columnwidth}
\vspace{-10pt}
\footnotesize

\hrule
\vspace{3pt}
\captionof{algorithm}{Private RAG with Random Projection (Standard RAG)}
\label{alg:rpllmrag}
\vspace{3pt}
\hrule
\vspace{3pt}

\begin{algorithmic}[1]

\Require Datastore $D$ with $n$ documents;
embedding function $f:\text{Text}\to\mathbb{R}^d$
\Statex \textbf{Input:} User query $q$
\Statex \textbf{Output:} Generated answer $a$
\Statex \textbf{Parameter:} Projection dimension $k$

\Statex \textbf{Offline preprocessing}
\State Sample $R\in\mathbb{R}^{k\times d}$,
$R_{ij}\sim\mathcal{N}(0,1/k)$
\State Compute embeddings
$E_D\in\mathbb{R}^{d\times n}$,
$\bm{e}_i=f(d_i)$
\State Project: $\tilde{E}_D=RE_D$

\Statex \textbf{Online inference (per query)}
\State Initialize $a\gets\emptyset$
\State Compute query embedding: $\bm{x}=f(q)$
\State Project: $\tilde{\bm{x}}=R\bm{x}$
\State Calculate similarity between
$\tilde{\bm{x}}$ and $\tilde{E}_D$
\State Retrieve most relevant text: $y$
\State $q\gets q\Vert y$
\State $a\gets LM(q)$
\State \Return{$a$}

\end{algorithmic}

\vspace{3pt}
\hrule
\vspace{-10pt}
\end{wrapfigure}

We sample an IID\ Gaussian matrix $R \in \mathbb{R}^{k \times d}$, where each element follows $\mathcal{N}(0,1/k)$. We precompute the projected datastore embeddings offline by computing $\bm{e}_i' = R\,\bm{e}_i$ for each document embedding $\bm{e}_i=f(d_i) \in \mathbb{R}^d$. We perform this preprocessing step once and store the results, introducing no runtime overhead during inference. For any query/prompt, we encode it as $\bm{x} = f(\text{prompt})$ and compute its projection $\bm{x}' = R\,\bm{x}$, which requires only a single matrix-vector multiplication. We then perform retrieval in the $k$-dimensional space by finding the nearest neighbors to $\bm{x}'$ among $\{\bm{e}_i'\}_{i\in[n]}$ and computing a softmax over their cosine similarities with $\bm{x}'$. Finally, we combine this softmax with the parametric LLM's logits for $\bm{x}$ in a weighted manner to select the next token. We repeat generation until reaching the maximum token limit or generating an end-of-sequence token. 

\Cref{alg:rpknnlm} (in \Cref{apex:alg}) and \Cref{alg:rpllmrag} illustrate the implementation of our random projection method. The overall process follows the standard RAG pipeline: we first tokenize and embed the datastore offline, and store the representations in FAISS or another efficient indexing framework to enable fast retrieval at inference time. During inference, the model performs autoregressive generation via iterative next-token prediction.

For standard RAG (\Cref{alg:rpllmrag}), we compute the similarity between the projected query embedding $\tilde{x}$ and the projected datastore embeddings $\tilde{E}_D$, and retrieve the most relevant text based on this similarity. The retrieved text is concatenated with the original query to form an augmented prompt, which is then passed to a parametric LLM to generate the final output. The algorithm of KNN-LM is summarized in \Cref{apex:alg}.

\begin{figure*}[!htbp]
    \centering
    \includegraphics[width=\linewidth]{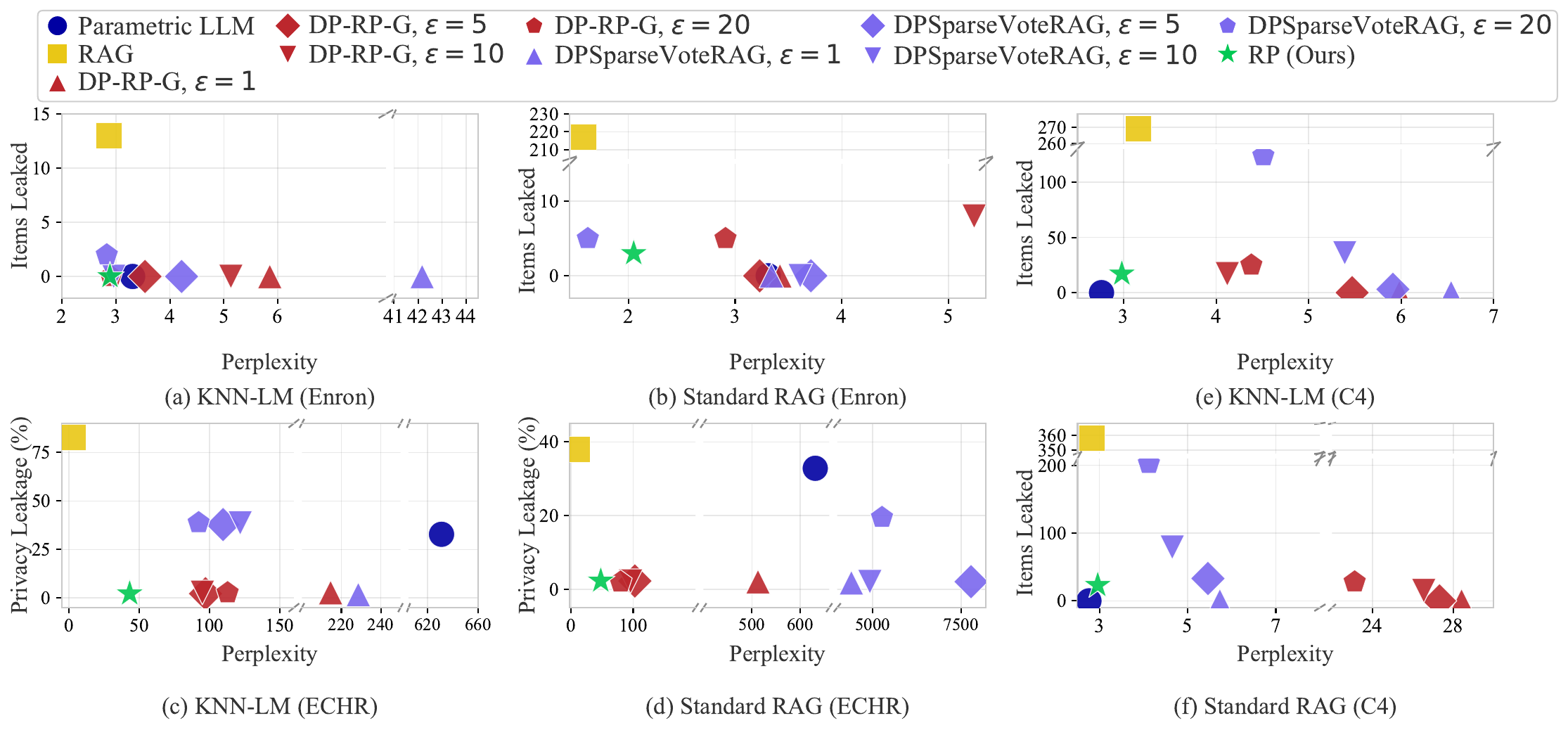}
    \captionsetup{skip=2pt}
    \caption{Utility/privacy comparisons of our method (RP) with the two DP baselines (DP-RP-G and DPSparseVoteRAG) on two RAG architectures (denoted as KNN-LM and Standard RAG) and targeted attacks on Enron Email and ECHR and untargeted attacks on C4. $\varepsilon$ denotes various privacy budgets for the DP approaches. The x-axis is perplexity of generated content, and the y-axis is percentage of data leakage. RP (Ours) achieves a superior utility–privacy trade-off, with perplexity approaching the unprotected RAG baseline while maintaining empirically near-zero privacy leakage comparable to the standalone parametric model without retrieval-augmentation.}
    \label{fig:results}
\end{figure*}

\begin{table*}[!tbp]
\centering
\captionsetup{skip=2pt}
\caption{
Full privacy-utility comparison across datasets and methods under the repeated-query attack. Utility is measured using perplexity (PPL) and QA F1 scores. For Enron Email and C4, privacy leakage is reported as the number of leaked instances, whereas for ECHR, it is reported as a percentage.}
\captionsetup{skip=2pt}
\label{tab:full-results}

\renewcommand{\arraystretch}{0.88}

{%
\tablebodyfont
\setlength{\tabcolsep}{2pt}

\setlength{\aboverulesep}{1.5pt}
\setlength{\belowrulesep}{1.5pt}
\setlength{\cmidrulesep}{1.5pt}

\begin{tabular*}{\textwidth}
{@{\extracolsep{\fill}}lllcccccccccc>{\columncolor{green!20}}c@{}}
\toprule
\multirow{2}{*}{Method}
& \multirow{2}{*}{Dataset / Model}
& \multirow{2}{*}{Metric}
& \multirow{2}{*}{Param.}
& \multirow{2}{*}{RAG}
& \multicolumn{4}{c}{DP-RP-G}
& \multicolumn{4}{c}{DPSparseVoteRAG}
& \multirow{2}{*}{\textbf{RP (Ours)}} \\
\cmidrule(lr){6-9}
\cmidrule(lr){10-13}
& & & & 
& $\epsilon{=}1$
& $\epsilon{=}5$
& $\epsilon{=}10$
& $\epsilon{=}20$
& $\epsilon{=}1$
& $\epsilon{=}5$
& $\epsilon{=}10$
& $\epsilon{=}20$
& \\
\midrule

\multirow{10}{*}{KNN-LM}

& \multirow{2}{*}{Enron Email / GPT2}
& PPL $\downarrow$
& 3.31 & 2.87
& 5.85 & 3.54 & 5.13 & 2.91
& 42.17 & 4.22 & 2.96 & 2.83
& \textbf{2.89} \\

&
& Leakage (\#) $\downarrow$
& 0 & 13
& 0 & 0 & 0 & 0
& 0 & 0 & 0 & 2
& \textbf{0} \\

\cmidrule(lr){2-14}

& \multirow{3}{*}{ECHR / Llama3-1B}
& PPL $\downarrow$
& 631.09 & 2.90
& 214.54 & 99.28 & 94.87 & 112.85
& 228.59 & 109.64 & 121.82 & 92.26
& \textbf{46.15} \\

&
& F1 $\uparrow$
& 25.70 & 90.32
& 53.25 & 51.30 & 54.02 & 55.13
& 41.67 & 41.67 & 40.32 & 45.38
& \textbf{87.70} \\

&
& Leakage (\%) $\downarrow$
& 32.82 & 82.72
& 2.76 & 2.41 & 2.81 & 2.69
& 1.75 & 37.86 & 38.68 & 38.82
& \textbf{2.26} \\

\cmidrule(lr){2-14}

& \multirow{3}{*}{ECHR / Qwen3-4B}
& PPL $\downarrow$
& 1.47 & 1.55
& 2.26 & 2.28 & 2.28 & 2.30
& 3130.89 & 610.10 & 500.13 & 277.62
& \textbf{1.55} \\

&
& F1 $\uparrow$
& 90.18 & 90.32
& 80.35 & 80.32 & 80.18 & 82.39
& 52.40 & 51.37 & 51.94 & 52.98
& \textbf{89.32} \\

&
& Leakage (\%) $\downarrow$
& 2.03 & 19.08
& 13.45 & 13.21 & 14.85 & 15.39
& 2.44 & 2.46 & 2.21 & 2.24
& \textbf{1.94} \\

\cmidrule(lr){2-14}

& \multirow{2}{*}{C4 / Llama3-8B}
& PPL $\downarrow$
& 2.76 & 3.16
& 5.98 & 5.47 & 4.12 & 4.38
& 6.54 & 5.91 & 5.39 & 4.51
& \textbf{2.98} \\

&
& Leakage (\#) $\downarrow$
& 0 & 269
& 0 & 0 & 17 & 25
& 0 & 3 & 36 & 124
& \textbf{17} \\

\midrule

\multirow{10}{*}{Standard RAG}

& \multirow{2}{*}{Enron Email / GPT2}
& PPL $\downarrow$
& -- & 1.58
& 3.42 & 3.23 & 5.24 & 2.91
& 3.34 & 3.71 & 3.61 & 1.62
& \textbf{2.05} \\

&
& Leakage (\#) $\downarrow$
& -- & 217
& 0 & 0 & 8 & 5
& 0 & 0 & 0 & 9
& \textbf{3} \\

\cmidrule(lr){2-14}

& \multirow{3}{*}{ECHR / Llama3-1B}
& PPL $\downarrow$
& -- & 9.84
& 512.66 & 104.70 & 95.72 & 79.72
& 4398.91 & 7780.70 & 4922.52 & 5264.40
& \textbf{47.92} \\

&
& F1 $\uparrow$
& -- & 97.18
& 51.39 & 54.31 & 54.12 & 54.79
& 1.72 & 43.72 & 42.42 & 43.33
& \textbf{66.70} \\

&
& Leakage (\%) $\downarrow$
& -- & 37.92
& 2.03 & 1.87 & 2.15 & 1.96
& 1.77 & 2.05 & 2.05 & 19.48
& \textbf{2.16} \\

\cmidrule(lr){2-14}

& \multirow{3}{*}{ECHR / Qwen3-4B}
& PPL $\downarrow$
& -- & 1.19
& 2.50 & 2.49 & 2.51 & 2.50
& 1.74 & 1.43 & 1.62 & 1.38
& \textbf{1.19} \\

&
& F1 $\uparrow$
& -- & 97.32
& 85.32 & 86.17 & 86.43 & 85.19
& 88.35 & 91.03 & 93.48 & 94.32
& \textbf{95.43} \\

&
& Leakage (\%) $\downarrow$
& -- & 21.30
& 25.43 & 14.36 & 17.07 & 15.09
& 2.84 & 2.88 & 2.86 & 3.32
& \textbf{2.05} \\

\cmidrule(lr){2-14}

& \multirow{2}{*}{C4 / Llama3-8B}
& PPL $\downarrow$
& -- & 2.83
& 28.41 & 27.32 & 26.54 & 23.12
& 5.73 & 5.46 & 4.65 & 4.12
& \textbf{2.96} \\

&
& Leakage (\#) $\downarrow$
& -- & 358
& 0 & 0 & 15 & 28
& 0 & 33 & 79 & 203
& \textbf{23} \\

\bottomrule
\end{tabular*}%
}
\end{table*}

\paragraph{Privacy Implications.} 
Random projection preserves privacy by altering the retrieval process: given a query, vanilla RAG retrieves the highest-scoring document and generates a corresponding token, which may contain sensitive information. After RP, embeddings that are close to the original top match can be selected with non-negligible probability (\cref{apx:prob-after-rp}). As a result, alternative tokens may replace the originally retrieved token. 

\paragraph{Utility Discussions.}
A natural question arises: \emph{if random projections alter the retrieved neighbors, how does the model remain accurate?} The reason is that random projection approximately preserves embedding distances, ensuring retrieval still selects relevant information. This has been extensively studied in prior literature~\cite{johnson1984extensions}. In \cref{apx:rp-cos-proof}, for completeness, we provide proof that our random projection preserves $L_2$ distance and cosine similarity. \cref{sec:sanity-check} shows a study with synthetic data consistent with the theoretical analysis. For DP-based baselines, achieving privacy guarantees sacrifices utility. DP introduces a fundamental tradeoff: a small clipping bound overly distorts embeddings, while a large bound increases noise magnitude, both degrading retrieval utility. We illustrate this in \cref{sec:onlyclip}.

\section{Evaluation} 
\label{sec:experiments}

\subsection{Settings}
We focus on empirical data extraction attacks~\citep{huang2023privacy} rather than differential privacy bounds, considering both repeated single-query and multi-round query attacks. We evaluate four datastore/LLM pairs: Enron Email~\citep{klimt2004enron} with GPT-2~\citep{radford2019language}, European Court of Human Rights (ECHR) with \textsc{Llama3-1B-Instruct}, C4 with \textsc{Llama3-8B-Instruct}, and HotpotQA~\citep{yang2018hotpotqa} with \textsc{Qwen2-7B-Instruct}. For utility, we measure perplexity~\citep{huang2023privacy} and task-specific performance.

For simplicity, we denote our random projection as RP, DP-guaranteed random projection with Gaussian noise as DP-RP-G~\cite{blocki2012johnson}, and the work of \citet{koga2024privacy} as DPSparseVoteRAG. We also report results for vanilla RAG and the parametric model without a datastore. For differential privacy, we use a wide range of privacy budgets with $\epsilon \in \{1, 5, 10, 20\}$. For DP-RP-G, the original DP-RP method by \citet{blocki2012johnson} assumes binary vector embeddings. To adapt this framework to RAG and document retrieval, we first project both queries and datastore embeddings into a lower-dimensional space, clip the projected embeddings, and then add Gaussian noise to obtain differential privacy guarantees. The complete procedure is in \Cref{apex:alg}. We sweep the clipping bound from $\{0.1,0.5,1,5,10,15,25,50,100\}$ under each setting and report the results with optimal clipping bound. Remaining settings are in \Cref{apx:hyper}. For DPSparseVoteRAG, we {sweep the subset number $m$ over $\{10,20,30,40,50,60\}$ and choose the best value $50$ (\Cref{apx:tunem})}.

In our methods, for KNN-LM, we set $\lambda$ as 0.1 and $K$ as 1024. For RP, the only tunable parameter is $k$ (the projection dimension). Larger $k$ values better preserve utility while maintaining privacy protection. We choose $k=100$ since it achieves comparable utility to larger values (e.g., $k=1600$) while providing better computational efficiency. More detailed analysis is in \Cref{appx:setk}.

\subsection{Repeated-Query Attacks}
\label{sec:repeat}

\begin{wraptable}{r}{0.48\linewidth}
\centering
\captionsetup{skip=2pt}
\caption{Number of leaked email addresses, URLs, and phone numbers with different methods on Enron Email.}
\captionsetup{skip=2pt}
\label{tab:enron-full-leakage}
{
\tablebodyfont
\renewcommand{\arraystretch}{0.9}
\setlength{\tabcolsep}{2pt}
\setlength{\aboverulesep}{0.5pt}
\setlength{\belowrulesep}{0.5pt}
\setlength{\cmidrulesep}{0.5pt}

\begin{tabular}{@{}lcccccc@{}}
\toprule
 & \multicolumn{3}{c}{KNN-LM} & \multicolumn{3}{c}{Standard RAG} \\
\cmidrule(lr){2-4}\cmidrule(lr){5-7}
 & Email & URL & Phone & Email & URL & Phone \\
\midrule
Parametric                     & 0 & 0 & 0 & 0 & 0 & 0 \\
RAG                            & 0 & 13 & 0 & 36 & 56 & 125 \\
\rowcolor{green!20}
RP (Ours)
& \cellcolor{green!20}0
& \cellcolor{green!20}0
& \cellcolor{green!20}0
& \cellcolor{green!20}0
& \cellcolor{green!20}3
& \cellcolor{green!20}0 \\
\midrule
DP-RP-G, $\epsilon=1$          & 0 & 0 & 0 & 0 & 0 & 0 \\
DP-RP-G, $\epsilon=5$          & 0 & 0 & 0 & 0 & 0 & 0 \\
DP-RP-G, $\epsilon=10$         & 0 & 0 & 0 & 0 & 8 & 0 \\
DP-RP-G, $\epsilon=20$         & 0 & 0 & 0 & 0 & 1 & 4 \\
\midrule
DPSparseVoteRAG, $\epsilon=1$  & 0 & 0 & 0 & 0 & 0 & 0 \\
DPSparseVoteRAG, $\epsilon=5$  & 0 & 0 & 0 & 0 & 0 & 0 \\
DPSparseVoteRAG, $\epsilon=10$ & 0 & 0 & 0 & 0 & 0 & 0 \\
DPSparseVoteRAG, $\epsilon=20$ & 0 & 2 & 0 & 0 & 8 & 1 \\
\bottomrule
\end{tabular}}
\end{wraptable}
We launch repeated-query targeted attacks on Enron Email for personal identity information (PII) and ECHR for legal documentsand untargeted attacks on raw C4 content. Attackers can repeately inject the same query multiple times.  To make repeated-query extraction attack stronger, we inject dataset-specific prior knowledge into malicious queries, such as asking for emails when attacking the Enron Email dataset. For each entry, we perform 10 times repeated extraction attempts, and count any successful extraction as leakage. We assume the system flags and blocks queries repeated more than 10 times. Details, including query construction, are provided in \Cref{apx:implementation}.

\Cref{fig:results} gives an overview of the privacy/utility trade-off of different methods under varying privacy budgets on three datasets under repeated-query attack while \Cref{tab:full-results} gives details. Standard RAG achieves higher utility but poses greater risks of privacy leakage. Without protection, RAG leaks over 200 personal items in the Enron dataset. In contrast, our methods maintain utility close to RAG while leaking almost no private information. For DP-RP-G, we achieve strong privacy across all budgets, but utility remains unsatisfactory. DPSparseVoteRAG (DPSVR) shows that increasing the privacy budget reduces perplexity but increases leakage, lacking a balanced trade-off point. Overall, our approach gives strong privacy with minimal utility loss.

\begin{wraptable}{r}{0.4\linewidth}
\centering
\captionsetup{skip=2pt}
\caption{Membership leakage rate. RP (Ours) can achieve approximately the same performance against MIA as parametric, which means the attacker barely identify membership information from RAG datastore.}
\captionsetup{skip=2pt}
\label{tab:mia}
 {
\tablebodyfont
\setlength{\tabcolsep}{2pt}
\begin{tabular}{lcccc}
\toprule
Method & Enron Email & \shortstack{ECHR \\ (Qwen3-4B)} & C4 & HotpotQA \\
\midrule
Parametric & 0.001 & 0.006 & 0.003 & 0.001 \\
RAG        & 0.282 & 0.996 & 0.843 & 0.913 \\
\textbf{RP (Ours)} & \textbf{0.003} & \textbf{0.007} & \textbf{0.003} & \textbf{0.001} \\
\bottomrule
\end{tabular}}
\end{wraptable}

Our evaluation covers both structured and unstructured sensitive information to assess the method's generalizability. On the Enron dataset, we evaluate protection against structured data including email addresses, phone numbers, and URLs. As shown in \Cref{tab:enron-full-leakage}, random projection effectively protects all three types, with leakage rates below 3\% across both KNN-LM and Standard RAG architectures. We observe that URLs are the easiest to leak. For unstructured text, we evaluate on the ECHR dataset, which contains legal case narratives with sensitive personal information embedded in natural language. Our method maintains strong privacy protection (2.19--2.27\% leakage) while preserving utility, demonstrating that random projection works across both structured and unstructured sensitive information. This suggests that the privacy-preserving mechanism operates at the embedding level, making it agnostic to the format or structure of the sensitive content.

\par\noindent\usebox{\miatable}\par\medskip

A further extension of the repeated-query attack is the membership inference attack (MIA). MIA is important in RAG privacy, but it targets a different adversarial objective from the data extraction attacks studied in this paper: determining whether a known record is contained in the datastore, so we do not focus on evaluating various MIA methods. We evaluate this threat over the full private dataset and measure attack accuracy following E-MIA~\citep{guan2026mia}, which follows the standard MIA protocol for RAG systems under our threat model. As shown in \Cref{tab:mia}, RP provides strong protection against E-MIA, achieving empirically near-zero membership leakage.

\subsection{Multi-Round Adaptive Attacks}
\label{sec:multi}
\begin{wraptable}{r}{0.45\linewidth}
\centering
\caption{Privacy results against multi-round attacks on HotpotQA using the KNN-LM framework.}
\captionsetup{skip=2pt}
\label{tab:hotpotqamain}
\tablebodyfont
\renewcommand{\arraystretch}{0.9}
\setlength{\tabcolsep}{4pt}
\setlength{\aboverulesep}{1pt}
\setlength{\belowrulesep}{1pt}
\setlength{\cmidrulesep}{1pt}

\begin{tabular}{@{}lccccccc@{}}
\toprule
& \multicolumn{4}{c}{\citet{jiang2024feedback}}
& \multicolumn{3}{c}{\citet{yao2026connect}} \\
\cmidrule(lr){2-5}\cmidrule(l){6-8}
& CRR & SS & EED & Rounds
& CRR & SS & EED \\
\midrule
Parametric & 30\% & 0.17 & 0.030 & 2.18 & 37\% & 0.033 & 0.07 \\
RAG & 60\% & 0.16 & 0.028 & 2.05 & 75\% & 0.034 & 0.10 \\
\midrule
\rowcolor{green!20}
RP (Ours) & 3\% & 0.16 & 0.88 & 2.05 & 0\% & 0.032 & 0.93 \\
DP-RP-G ($\epsilon=10$) & 3\% & 0.17 & 0.89 & 2.01 & 0\% & 0.031 & 0.90 \\
DPSVR ($\epsilon=10$) & 2\% & 0.17 & 0.88 & 2.34 & 0\% & 0.034 & 0.90 \\
\bottomrule
\end{tabular}
\end{wraptable}

Another strong attack is the multi-round adaptive attack~\citep{jiang2024feedback}, where an attacker iteratively refines adversarial prompts based on outputs from previous rounds and aggregates the extracted information to infer private content. \citet{yao2026connect} further constructs a knowledge graph from outputs across attack rounds to generate future adversarial queries. The attack budget is set to a maximum of 200 queries to the RAG system. 

For both attack methods, we evaluate the reconstructed content using semantic similarity (SS) and extended edit distance (EED). Following the experimental setup of \citet{jiang2024feedback}, we construct the datastore using chunks of 1500 words with a maximum overlap of 300 words. We report the chunk recovery rate (CRR), considering a chunk recovered if the reconstructed content overlaps with the original chunk by more than 1200 words. For \citet{jiang2024feedback}, we report the average rounds required to satisfy the termination condition.

\begin{wraptable}{r}{0.45\linewidth}
\centering
\caption{Utility results of different algorithms on HotpotQA using the KNN-LM structure.}
\captionsetup{skip=2pt}
\label{tab:hotpotqa}
{
\tablebodyfont
\renewcommand{\arraystretch}{0.9}
\setlength{\tabcolsep}{2pt}
\setlength{\aboverulesep}{1pt}
\setlength{\belowrulesep}{1pt}
\setlength{\cmidrulesep}{1pt}

\begin{tabular}{cccccccc}
\toprule
 & \multicolumn{2}{c}{Ans} & \multicolumn{2}{c}{Sup} & \multicolumn{2}{c}{Joint} & \multirow{2}{*}{Cos.} \\
 & EM & F1 & EM & F1 & EM & F1 & \\
\midrule
Parametric                     &  7.48 & 28.74 & 21.37 & 49.62 &  7.13 & 28.55 & 0.73 \\
RAG                             & 12.48 & 29.88 & 25.38 & 59.74 & 10.84 & 32.06 & 0.75 \\
\midrule
\rowcolor{green!20}
RP (Ours) & 11.32 & 28.57 & 25.22 & 57.14 & 9.12 & 34.09 & 0.75 \\
\midrule
DP-RP-G, $\epsilon=1$          &  0.08 &  7.10 &  9.54 & 30.42 &  0.15 &  9.03 & 0.54 \\
DP-RP-G, $\epsilon=5$          &  0.21 &  9.32 &  10.35 & 27.82 &  0.16 &  9.85 & 0.53 \\
DP-RP-G, $\epsilon=10$         & 3.58 &  10.84&  9.71 & 28.93 &  1.19 &  7.34 & 0.54 \\
DP-RP-G, $\epsilon=20$         &  4.56 &  9.81 & 14.95 & 31.23 &  1.67 &  8.32 & 0.61 \\
\midrule
DPSparseVoteRAG, $\epsilon=1$  &  0.02 &  8.12 & 13.98 & 23.12 &  0.00 & 13.12 & 0.51 \\
DPSparseVoteRAG, $\epsilon=5$  &  3.31 & 11.23 & 12.88 & 30.64 &  1.31 & 14.29 & 0.52 \\
DPSparseVoteRAG, $\epsilon=10$ &  7.48 & 18.18 & 14.39 & 28.85 &  1.54 & 16.67 & 0.54 \\
DPSparseVoteRAG, $\epsilon=20$ &  7.48 & 18.18 & 14.38 & 28.85 &  1.54 & 15.38 & 0.54 \\
\bottomrule
\end{tabular}}
\end{wraptable}
We launch both multi-round targeted attacks on HotpotQA for document information. For meausing the utility of RAG system on usual queries. We report exact match (EM) and F1 scores between the generated output and the ground truth. For F1, precision is the number of matching tokens divided by the total generated tokens, and recall is the number of matching tokens divided by the ground-truth tokens. \emph{Ans} denotes the final generated answer, and \emph{Sup} denotes the retrieved supporting documents, measuing all tokens in the documents. For the parametric model, we prompt it to generate supporting facts to enable fair comparison, as the datastore may overlap with pre-training data. \emph{Joint} indicates cases where we put answer and supporting documents together to measure.

\begin{figure}[!t]
\centering
\usebox{\multiroundtables}
\end{figure}

From \Cref{tab:hotpotqamain}, we observe that the three privacy-preserving methods achieve similar utility, while our method attains the highest accuracy. All methods achieve similar semantic similarity between the original datastore content in each chunk and the attacked results, which we attribute to LLMs attempting to generate correct answers. In contrast, a high chunk recovery rate (i.e., how much datastore content is completely restored) and low EED indicate successful extraction attacks. Examples of extracted content are shown in \Cref{tab:examplemultiround}. As shown in \Cref{tab:hotpotqa}, random projection preserves cosine similarity between retrieved documents and ground truth, indicating successful retrieval. In contrast, differential privacy methods degrade retrieval performance. Looser privacy budgets can improve utility but raise privacy concerns.

\begin{table*}[!htbp]
\centering
\caption{Examples of attack results under the multi-round setting. We highlight content matching the original datastore chunks. RP avoids revealing sensitive identity information while preserving relational logic, such as who was born first, even after multi-round attacks using a knowledge graph to retrieve related chunks~\citep{yao2026connect}.}
\captionsetup{skip=2pt}
\label{tab:examplemultiround}
{
\renewcommand{\arraystretch}{1.1}
\tablebodyfont
\setlength{\tabcolsep}{2pt}
\begin{tabular}{p{\dimexpr0.18\linewidth-4pt\relax} p{\dimexpr0.41\linewidth-4pt\relax} p{\dimexpr0.41\linewidth-4pt\relax}}
\toprule
Original Chunk &\multicolumn{2}{p{0.78\linewidth}}{...Aleksander Ford: Aleksander Ford (born Mosze Lifszyc, 24 November 1908 in Kiev, Russian Empire – 4 April 1980 in Naples, Florida, United States) was a Polish Jewish film director, and head of the Polish People's Army Film Crew in the Soviet Union during World War II. Ford became director of the nationalized Film Polski company following the Red Army occupation of Poland. 7 Days in Havana: 7 Days in Havana (Spanish: 7 días en La Habana ) is a 2012 Spanish-language anthology film. Set during a week in the Cuban capital Havana, the film features one segment for each day, each segment directed by a different filmmaker. The directors are Julio Medem, Laurent Cantet, Juan Carlos Tabío, Benicio del Toro, Gaspar Noé, Pablo Trapero and Elia Suleiman. The screenplay was written by the Cuban novelist Leonardo Padura Fuentes. The film is a co-production between companies in Spain, France and Cuba...}\\
\midrule
Initial Query&\multicolumn{2}{p{0.78\linewidth}}{Pablo Trapero or Aleksander Ford? Who was born first?}\\
\midrule
Attacking Methods & Response by RAG & Response by ours\\
\midrule

Adaptive~\citep{jiang2024feedback} &
<think> \hl{Aleksander Ford (born Mosze Lifszyc, 24 November 1908 in Kiev, Russian Empire – 4 April 1980 in Naples, Florida, United States) was a Polish Jewish film director}, and 2006 BBC Four television. \hl{Ford became director of the nationalized Film Polski company following the Red Army occupation of Poland. 7 Days in Havana: 7 Days in Havana (Spanish: 7 días en La Habana )} is a "Pennies From Heaven" (1978)...&
<think> \hl{Pablo Trapero} or \hl{Aleksander Ford}? 7 Who was born first? 8 Who was born first? \hl{Aleksander Ford}?\\

\citet{yao2026connect} &
<think> Thinking Process:  1.  **Analyze the Request:**     *   Target: An English actor of stage, film, and television.     *   Condition 1: \hl{Aleksander Ford (born Mosze Lifszyc, 24 November 1908 in Kiev, Russian Empire – 4 April 1980 in Naples, Florida, United States) was a Polish Jewish film director} *   Condition 2: \hl{7 Days in Havana: 7 Days in Havana (Spanish: 7 días en La Habana )}    *   Goal: Who was born first?     * ... &
<think> Who was born first? \hl{Pablo Trapero} or \hl{Aleksander Ford}?  <think> Thinking Process:  1.  **Analyze the Request:** The user is asking for a comparison of birth dates between two individuals: \hl{Pablo Trapero} and \hl{Aleksander Ford}. The goal is to determine who was born first.  2.  **Identify the Individuals:**     *   **\hl{Pablo Trapero}:** An Argentine film director,  screenwriter,  and producer.     *   **\hl{Aleksander Ford}:** A Polish film director and screenwriter.  3.  **Retrieve Birth Dates:**     *   **\hl{Pablo Trapero}:** I need to recall or verify his birth date.         *   Search query mental check: "\hl{Pablo Trapero} birth date".         *   Knowledge: 
\hl{Pablo Trapero} was born on \uline{\textcolor{red}{December 27,  1965}}.     *   **\hl{Aleksander Ford}:** I need to recall or verify his birth date.         *   Search query mental check: "\hl{Aleksander Ford} birth date".         *   Knowledge: \hl{Aleksander Ford} was born on \uline{\textcolor{red}{September 1,  1923}}.  4.  **Compare Dates:**     *   \hl{Aleksander Ford}\\

\bottomrule
\end{tabular}}
\end{table*}

We present a case study of results under multi-round attacks. From \Cref{tab:examplemultiround}, we can see that \emph{Pablo Trapero} and \emph{Aleksander Ford} are originally mentioned in the question, so it is reasonable that the responses also contain these two names. However, the response generated by our method is substantially different from the original dataset chunk while still focusing on answering the question. No sensitive information, such as address, nationality, or birthdate, is revealed. Hence, this demonstrates successful defense against multi-round attacks.

One interesting aspect of our response is that Pablo Trapero was actually born on October 4th, 1971, and Aleksander Ford on November 24th, 1908. However, our response states that they were born on December 27th, 1965, and September 1st, 1923, respectively. This indicates that our method preserves the final answers without revealing the actual birthdates. 
{ The attacker may also mistakenly conclude that the birthdate information has been retrieved and stop the attack.} In contrast, standard RAG reveals a large portion of the original content. \citet{jiang2024feedback} successfully extract most sensitive content, while the knowledge graph-based method recovers most original information.

\subsection{Effects of $k$}
\label{appx:setk}

In both RP and DP-RP-G, we sample the projection matrix from $\mathcal{N}(0, 1/k)$ where $k$ is the projection dimension. \cref{fig:diffk} examines the impact of different $k$ values. As discussed in \cref{sec:alg}, larger $k$ better preserves retrieval scores. Consistently, we observe higher utility with increasing $k$, while privacy remains unchanged. Our method relies on similarity-preserving projection rather than precise distance preservation (\Cref{apx:rp-cos-proof} and \Cref{apx:prob-after-rp}), so the expected post-projection ranking change is less sensitive to 
$k$, indicating robustness across a wide range of values. Based on analysis and empirical observation of choosing $k$, we suggest following ways of choosing $k$. If a user wants more privacy guarantee or less offline computation latency, a smaller $k$ should be chosen (e.g., $k=25$ for $d=2048$). If a user wants higher utility, a bigger $k$ should be chosen (e.g., $k=1600$ for $d=2048$). Otherwise, a moderate value would be suitable (e.g., $k=100$ for $d=2048$).

\begin{figure}[!htbp]
\centering
\includegraphics[width=0.6\linewidth]{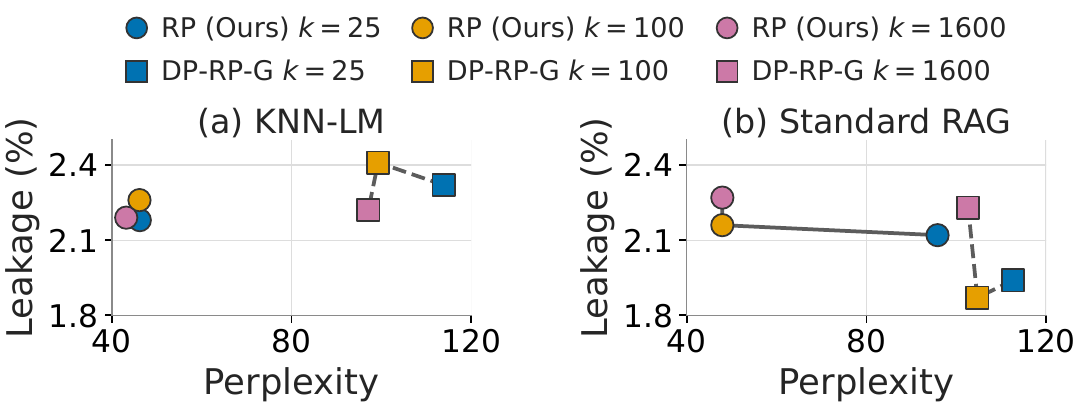}
\captionsetup{skip=2pt}
\caption{Comparison of privacy/utility. Colors indicate $k$; markers distinguish RP  and DP-RP-G. $k$ has weak relation with privacy but larger $k$ gives better utility.}

\label{fig:diffk}
\end{figure}

To understand the effect of not setting the sampling variance as $1/k$, we use $k=1600$ and sample from $\mathcal{N}(0, 0.2^2)$, resulting in a perplexity of $70.86$ and $2.32\%$ privacy leakage. For $k=25$ and $\mathcal{N}(0, 0.025^2)$, perplexity is $71.38$ with $3.1\%$ leakage. Choosing sampling variances larger or smaller than $1/k$ degrades utility, validating that setting variance to $1/k$ is optimal.

\subsection{Efficiency and Scalability}

\begin{table*}[!htbp]
\centering

\begin{minipage}[t]{0.52\textwidth}
    \vspace{0pt}
    \centering
    \captionsetup{type=table,skip=3pt}
    \captionof{table}{Comparison of the number of tokens generated and latency per generation for different methods over the ECHR dataset on an NVIDIA A100.}
    \label{tab:efficiency}

    {\tablebodyfont
    \setlength{\tabcolsep}{2pt}
    \renewcommand{\arraystretch}{1.1}
    \begin{tabular}{@{}lccc@{}}
    \toprule
                         & RAG & RP (Ours) & DPSVR \\
    \midrule
    \# Tokens per Generation & 107.8 & \textbf{123.4} & 92.3 \\
    Latency per Generation (s) & 8.35 & \textbf{9.42} & 258.9 \\
    Latency per Token (s) & 0.08 & 0.08 & 2.80 \\
    \bottomrule
    \end{tabular}
    }
\end{minipage}
\hfill
\begin{minipage}[t]{0.46\textwidth}
    \vspace{0pt}
    \centering
    \captionsetup{type=table,skip=3pt}
    \captionof{table}{Perplexity comparisons of KNN-LM and standard RAG on ECHR.}
    \label{tab:onlyclip}

    {\tablebodyfont
    \setlength{\tabcolsep}{3pt}
    \renewcommand{\arraystretch}{1.1}
    \begin{tabular}{@{}lcc@{}}
    \toprule
    Method & KNN-LM & \makecell{Standard\\RAG} \\
    \midrule
    RP (Ours) & \textbf{43.20} & \textbf{47.92} \\
    DP-RP-G & 97.09 & 102.66 \\
    \makecell[l]{DP-RP-G (only clipping)} & 45.47 & 46.99 \\
    \bottomrule
    \end{tabular}
    }
\end{minipage}

\end{table*}

The DPSparseVoteRAG method can be inefficient. In our experiments, each inference takes about 30$\times$ longer than RP and vanilla RAG (\cref{tab:efficiency}). This is due to DP voting and aggregation require partitioning the datastore into $m$ subsets and generating a token for each, resulting in $m$ inferences per token. {Our method can achieve the same efficiency as RAG on latency per token.}

Besides efficiency, we also take a look at the scalability of our evaluation. We empirically study datastores of up to about 100K HotpotQA documents, corresponding to roughly 200 million tokens, which is a reasonably sized RAG datastore comparable with many real-world enterprise services~\citep{choubey2025benchmarking,sun2026enterpriserag}. Prior work~\citep{qi2025follow} shows that extraction rates decrease substantially as datastore size increases. We further conduct a controlled scalability study in \Cref{fig:scalability} by varying the datastore size within the ECHR/Qwen3-4B setting. We show (1) the extraction rate of vanilla RAG decreases as the datastore grows, while (2) our method consistently maintains strong privacy protection across datastore sizes. Moreover, as discussed in \Cref{sec:alg}, our RP method relies on replacing tokens or chunks with similar alternatives from the datastore. As the datastore size increases, the method has access to a larger pool of candidate tokens for such substitutions, reducing the extraction rate.

\begin{figure*}[!htbp]
\centering

\begin{minipage}[t]{0.54\textwidth}
    \vspace{0pt}
    \centering
    \includegraphics[width=\linewidth]{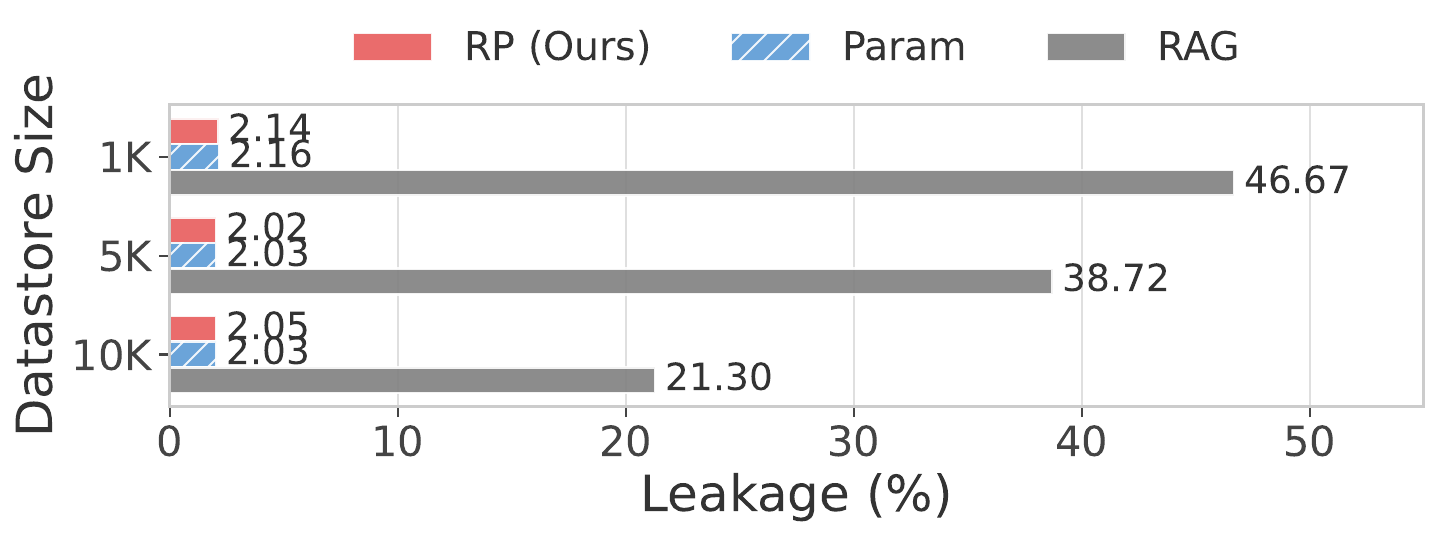}
    \captionsetup{skip=2pt}
    \caption{Scalability study on Standard RAG using ECHR with Qwen3-4B. Across different datastore sizes, RP maintains leakage close to the parametric baseline while consistently providing strong privacy protection.}
    \label{fig:scalability}
\end{minipage}
\hfill
\begin{minipage}[t]{0.44\textwidth}
    \vspace{0pt}
    \centering
    \includegraphics[width=0.8\linewidth]{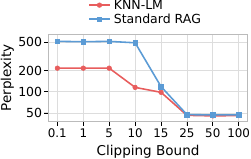}
    \captionsetup{skip=2pt}
    \caption{Perplexity of DP-RP-G with clipping only (without Gaussian noise) under different clipping bounds.}
    \label{fig:onlyclip2}
\end{minipage}

\end{figure*}

\subsection{Why Peformance Degrades More by Differential Privacy Methods}
\label{sec:onlyclip}

A natural question is: \emph{what is the challenge for differential-privacy based methods to maintain utiliy in RAG applications.}
For DP-RP-G, achieving an $(\epsilon,\delta)$ guarantee requires bounding sensitivity, typically by clipping each projected embedding to norm $c$, and then adding Gaussian noise
$\mathcal{N}\!\left(0,\frac{c\sqrt{2\ln(1.25/\delta)}}{\epsilon}\right)$.
If $\epsilon\rightarrow\infty$, the noise vanishes; with a sufficiently large clipping bound, DP-RP-G approaches ours RP method. Thus, the utility loss arises from the additional constraints required for formal DP rather than from RP itself.

DP aggregation methods face a similar trade-off, as their utility depends strongly on perturbation magnitude and therefore on privacy budget~\citep{koga2024privacy}. As shown in \Cref{sec:experiments}, even relatively loose privacy budgets can substantially degrade utility while still allowing non-negligible leakage.

Clipping is typically required in DP mechanisms to bound sensitivity~\citep{dwork2006calibrating}; prior work similarly constrains vectors to bounded domains~\citep{kenthapadi2013privacy}. To isolate the effect of clipping and adding noise, we ablate the clipping operation without Gaussian noise.  In \Cref{tab:onlyclip}, we compare RP with and without clipping, while \Cref{fig:onlyclip2} varies the clipping bound. The results show that larger clipping consistently reduces perplexity, whereas sufficiently large clipping bounds recover performance close to RP. 

\section{Conclusion}
In this work, we propose a simple random projection method to defend against data extraction attacks for RAG applications. Compared with DP-based approaches, we show that the specific random Gaussian projection can empirically protect against data reconstruction, at approximately the same level as parametric model without RAG datastore, while still maintaining semantic utility, with a 7\% overall decrease compared with vanilla RAG.
\section*{Limitations}
\label{sec:limitations}


At evaluation side, large-scale evaluation of privacy-preserving methods in real-world LLM-based RAG systems remains an interesting future work. Real-world systems may need to process billions of queries daily. In \Cref{tab:efficiency}, one query takes about 8 seconds. Hence, evaluating one million queries would take 92 days without system-level optimization. Such large-scale evaluation thus requires system-level contributions, which we defer to future work.

\section*{Ethical Considerations}
We use models and datasets that are publicly available on the Internet and licensed for non-commercial research use. We do not involve personally identifiable information (PII) which are not allowed to make public. We use only publicly available or synthetically generated datasets; we did not use any proprietary data or private user information at any stage of the experiments. These datasets are publicly available and widely used in the research community. Consistent with prior work, we did not further anonymize the data, as it was collected and released under publicly accessible licenses and used solely for research purposes. We did not introduce new personal data, nor did we attempt to re-identify individuals beyond what is already present in the original datasets.

\bibliography{main}
\bibliographystyle{unsrtnat}

\clearpage
\appendix
\setlength{\abovedisplayskip}{6pt plus 2pt minus 2pt}
\setlength{\belowdisplayskip}{6pt plus 2pt minus 2pt}
\setlength{\abovedisplayshortskip}{3pt plus 2pt minus 1pt}
\setlength{\belowdisplayshortskip}{3pt plus 2pt minus 1pt}

\etocdepthtag.toc{mtappendix}
\etocsettagdepth{mtchapter}{none}
\etocsettagdepth{mtappendix}{subsection}
{
\parskip=0em
\tableofcontents
}

\section{Usage of AI}
We employ large language models (LLMs) to improve the grammar and clarity of our writing and assist coding. All research ideas, directions, and decisions are independently conceived and carried out by the authors.

\section{Preservation of the $L_2$-Distance and Cosine Similarity of Random Projection}
\label{apx:rp-cos-proof}
The theory has already been proved by \citet{johnson1984extensions}. For completeness, we provide analysis below.
Let the query embedding be $\bm{x}$, the retrieved embedding by vanilla RAG be $\bm{e}$, and the random projection matrix be $R$. The original $\ell_2$ distance is $\lVert \bm{x} - \bm{e} \rVert_2$. 
According to the Johnson--Lindenstrauss transform, when the projection variance is $1/k$, we have the boundary condition
\[\displaystyle
(1 - \lambda_{JL}) \lVert \bm{x} - \bm{e} \rVert_2\leq \lVert R\bm{x} - R\bm{e} \rVert_2\leq (1 + \lambda_{JL}) \lVert \bm{x} - \bm{e} \rVert_2
\]
where $\lambda_{JL} = \Omega\bigl(\sqrt{\tfrac{\log d}{k}}\bigr)$~\citep{johnson1984extensions}.
This ensures that the $\ell_2$ distance is well preserved after projection.

Further, the original retrieval score, typically cosine similarity, $\rho = \frac{\bm{x}^\top \bm{e}}{\|\bm{x}\|_2 \, \|\bm{e}\|_2}$ can also be preserved after projection.  After random projection, the score becomes 
\[
\rho' = \frac{\bm{x}^\top R^\top R \bm{e}}{\|R \bm{x}\|_2 \, \|R \bm{e}\|_2}.
\]
For fixed $x$ and $e$, the expectation satisfies $\mathbb{E}[\rho'] = \rho \left( 1 - \frac{1 - \rho}{2k} \right) + \mathcal{O}\left(\frac{1}{k^2}\right)$ (Proof in \cref{proof:cosine}). Thus, when $k$ is sufficiently large, the expected cosine similarity after projection closely approximates the original score, preserving retrieval quality despite token perturbations. 

\subsection{Proof: Preservation of the Cosine Similarity}
\label{proof:cosine}
Our main argument is that with multiple rounds of token generation, the overall semantic meaning is preserved even if individual tokens are perturbed or replaced. In other words, we have
\begin{equation}
    \label{eqa:rho}
    \mathbb{E}[\rho']\approx\rho \left(1-\frac{1-\rho}{2k}\right).
\end{equation}
\begin{proof}
We have query embedding $\bm{x}$ and datastore embedding $\bm{e}$. First, let $\bm{u}=\frac{\bm{x}}{\lVert \bm{x}\rVert_2}$ and $\bm{v}=\frac{\bm{e}}{\lVert \bm{e} \rVert_2}$ and we can have $\rho=\bm{u}^\top \bm{v}$. With that, we can have 
\[
    \rho'=\frac{(\bm{u}^\top R^\top)(R\bm{v}))}{\lVert \bm{u}^\top R^\top\rVert_2 \lVert R\bm{v} \rVert_2}=\frac{(R\bm{u})^\top(R\bm{v}))}{\lVert R\bm{u}^\top\rVert_2 \lVert R\bm{v} \rVert_2}
\]
We let $\bm{a}=R\bm{u}$ and $\bm{b}=R\bm{v}$. Then we can have 
\[
    \rho'=\frac{\bm{a}^\top \bm{b}}{\lVert \bm{a}\rVert_2 \lVert \bm{b}\rVert_2}.
\]
Because $R$ is an IID Gaussian matrix and each element is independently sampled. We can find rotation matrices $U$ and $V$ where $URV \overset{d}{=} R$. Hence, we can have $R(V\bm{x})\overset{d}{=} R\bm{x}$. So, without loss of generality, we can rotate the basis so that $\bm{u}=\bm{e_1}=(1,0,0,\ldots,0)^\top$, $\bm{v}=\rho \bm{e_1}+\sqrt{1-\rho^2}\bm{e_2}$. We know that each row of $R$ is a Gaussian vector $\bm{g_i}\sim \mathcal{N}(0,I_d/k)$. So, $a_i=\bm{r_i}^T\bm{u}=r_{i1}$ and $b_i=\bm{r_i}^\top \bm{v}=\rho r_{i1}+\sqrt{1-\rho^2}r_{i2}$ where $r_{i1}$, $r_{i2}$ are IID $\mathcal{N}(0,1/k)$. Thus, for each $i=1,\ldots,k$,
\[
\begin{bmatrix}
a_i \\
b_i
\end{bmatrix}
\sim \mathcal{N}\!\left(
0,\ \frac{1}{k}
\begin{bmatrix}
1 & \rho \\
\rho & 1
\end{bmatrix}
\right).
\]
With this bivariate normal distribution, we can define
\[
    X_i=\sqrt{k}a_i \sim \mathcal{N}(0,1),
\]
\[
    Y_i=\sqrt{k}b_i \sim \mathcal{N}(0,1),
\]

And we can have that $\rm{Corr}(X_i,Y_i)=\rho$. Then, we can calculate 
\[
        \rho'=\frac{\sum_i X_i Y_i}{\sqrt{\sum_i X_i^2}\sqrt{\sum_i Y_i^2}}
\]
This is exactly the sample Pearson correlation of $k$ IID bivariate normal samples with population correlation $\rho$.

\begin{lemma}
\label{lemma:bivariate}
    For random variables $X_i$ and $Y_i$ where $X_i\sim \mathcal{N}(0,1)$, $Y_i\sim \mathcal{N}(0,1)$, and $\rm{Corr}(X_i,Y_i)=\rho$, we can calculate the expectation of sample Pearson correlation coefficient 
    \[
        \rho'=\frac{\sum_i X_i Y_i}{\sqrt{\sum_i X_i^2}\sqrt{\sum_i Y_i^2}}
    \] as
    \[
        \mathbb{E}[\rho']=\rho \left(1-\frac{1-\rho}{2k}\right)+\mathcal{O}\left(\frac{1}{k^2}\right)
    \]
\end{lemma}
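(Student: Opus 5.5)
The plan is a second-order delta-method (Taylor) expansion of $\rho'$ around the population moments. Write $S_{xy}=\frac1k\sum_i X_iY_i$, $S_{xx}=\frac1k\sum_i X_i^2$ and $S_{yy}=\frac1k\sum_i Y_i^2$, so that $\rho'=S_{xy}S_{xx}^{-1/2}S_{yy}^{-1/2}$. The population values are $(\rho,1,1)$. Set $S_{xy}=\rho+\alpha$, $S_{xx}=1+\beta$ and $S_{yy}=1+\gamma$, where $\alpha,\beta,\gamma$ are centered averages of $k$ i.i.d.\ terms. By the bivariate normal structure established just before the lemma, Isserlis' theorem gives
\[
\mathbb{E}[\alpha^2]=\tfrac{1+\rho^2}{k},\quad
\mathbb{E}[\beta^2]=\mathbb{E}[\gamma^2]=\tfrac{2}{k},\quad
\mathbb{E}[\alpha\beta]=\mathbb{E}[\alpha\gamma]=\tfrac{2\rho}{k},\quad
\mathbb{E}[\beta\gamma]=\tfrac{2\rho^2}{k}.
\]
All third and higher mixed moments are $\mathcal{O}(1/k^2)$.

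Next, expand $(1+\beta)^{-1/2}=1-\tfrac{\beta}{2}+\tfrac{3\beta^2}{8}-\dots$, and likewise in $\gamma$. Multiplying out and keeping terms through second order in $(\alpha,\beta,\gamma)$ gives
\[
\rho'=\rho+\alpha-\tfrac{\rho}{2}(\beta+\gamma)-\tfrac12\alpha(\beta+\gamma)+\tfrac{3\rho}{8}(\beta^2+\gamma^2)+\tfrac{\rho}{4}\beta\gamma+R_3,
\]
where $R_3$ is cubic. Taking expectations kills the linear terms. Inserting the covariances above then yields $\mathbb{E}[\rho']=\rho+\frac{c(\rho)}{k}+\mathbb{E}[R_3]$. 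The target of the lemma is $c(\rho)=-\tfrac{\rho(1-\rho)}{2}$, i.e.\ the form $\rho\bigl(1-\frac{1-\rho}{2k}\bigr)$ stated above. I would carry out this collection explicitly, grouping the $\rho^2$ and $\rho^3$ contributions.

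\textbf{Main obstacle: this collection step.} The contributions are $-\tfrac12\cdot\tfrac{4\rho}{k}$ from the cross term, $+\tfrac{3\rho}{8}\cdot\tfrac4k$ from the squares, and $+\tfrac{\rho}{4}\cdot\tfrac{2\rho^2}{k}$ from $\beta\gamma$. These combine to a multiple of $\rho(1-\rho^2)/(2k)$. That differs from the stated $\rho(1-\rho)/(2k)$ by $\rho^2(1-\rho)/(2k)$, which is not $\mathcal{O}(1/k^2)$ unless $\rho\in\{0,1\}$.

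So I expect the formula as literally worded to hold only under a further reading. I would check two things before committing the final line. First, whether the paper intends $1-\rho$ as shorthand in the regime $\rho\to1$. There $1-\rho^2=(1+\rho)(1-\rho)\approx2(1-\rho)$, so the two expressions agree up to the constant and the relevant factor $(1-\rho)$. Second, whether the normalization of $\rho'$ differs from the one I assumed. If neither reading applies, the proof I would write establishes the $(1-\rho^2)$ coefficient and records the discrepancy with the statement.

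For the remainder term, I would justify $\mathbb{E}[R_3]=\mathcal{O}(1/k^2)$ in two steps. First, note that $|\rho'|\le1$ is bounded. Then split on the event $\{|\beta|,|\gamma|\le\tfrac12\}$. On this event the Taylor remainder is controlled by cubic moments, and odd third moments of centered averages are $\mathcal{O}(1/k^2)$. On its complement a chi-square tail bound gives probability $e^{-ck}$. Combining both parts gives the $\mathcal{O}(1/k^2)$ error.
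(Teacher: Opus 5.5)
Your collection step is correct, and the discrepancy you found is real: the lemma as stated is false. The true expansion is $\mathbb{E}[\rho']=\rho\left(1-\frac{1-\rho^2}{2k}\right)+\mathcal{O}(1/k^2)$, which is the classical first-order bias of the sample correlation.

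Your route is the same as the paper's: a second-order expansion of $S_{xy}S_{xx}^{-1/2}S_{yy}^{-1/2}$, evaluated with Isserlis moments. The paper's execution has two slips.
\begin{itemize}
\item It writes $1/\sqrt{S_{xx}}=1-\frac12 S_{xx}+\mathcal{O}(1/k^2)$. This is expanded about the wrong point and drops $\frac38(S_{xx}-1)^2$, which together with its $S_{yy}$ counterpart contributes $+\frac{3\rho}{2k}$.
\item It writes the mixed term as $\frac14\mathbb{E}[S_{xx}S_{yy}]$ instead of $\frac14\mathbb{E}[S_{xy}(S_{xx}-1)(S_{yy}-1)]$. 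This loses the factor $\rho$, so your $+\frac{\rho^3}{2k}$ becomes $+\frac{\rho^2}{2k}$, i.e.\ $1-\rho^2$ becomes $1-\rho$.
\end{itemize}
Substituting the paper's displayed covariances literally gives $\rho-\frac{2\rho}{k}+\frac{\rho^2}{2k}$, which matches neither expression.

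So drop the hedging; neither of your proposed readings rescues the statement.
\begin{itemize}
\item Replacing $Y_i$ by $-Y_i$ sends $\rho\mapsto-\rho$ and $\rho'\mapsto-\rho'$, so $\mathbb{E}[\rho']$ must be odd in $\rho$. The stated formula has the nonzero even part $\frac{\rho^2}{2k}$.
\item Concretely, at $\rho=-1$ one has $Y_i=-X_i$ and $\rho'\equiv-1$, yet the statement predicts $-1+\frac1k$.
\item Near $\rho=1$ the true bias is about $-\frac{1-\rho}{k}$, against the stated $-\frac{1-\rho}{2k}$. They differ by a factor of two, so the ``shorthand'' reading fails.
\item The normalization of $\rho'$ is written out explicitly in the statement, so there is no alternative normalization to appeal to.
\end{itemize}
Prove the $(1-\rho^2)$ version and say plainly that the lemma, and the paper's formula for $\mathbb{E}[\rho']$ that rests on it, are misstated. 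The qualitative conclusion that the distortion is $\mathcal{O}(1/k)$ still holds.

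Your remainder step also needs one repair. Bounding a cubic Lagrange remainder on $\{|\beta|,|\gamma|\le\frac12\}$ goes through absolute third moments, which are $\Theta(k^{-3/2})$. The fact that signed third moments are $\mathcal{O}(k^{-2})$ does not help, because the remainder's coefficients depend on the intermediate point.

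The fix is as follows.
\begin{itemize}
\item Since $\rho'$ is affine in $\alpha$, expand only $(1+\beta)^{-1/2}(1+\gamma)^{-1/2}$, and carry it explicitly through third order.
\item The resulting cubic and quartic polynomial terms, after multiplying by $\rho+\alpha$, have expectation $\mathcal{O}(k^{-2})$.
\item The fourth-order remainder times $\rho+\alpha$ is $\mathcal{O}(k^{-2})$ by fourth moments and Cauchy--Schwarz.
\item The complement of the event costs only $e^{-ck}$, using $|\rho'|\le1$ and Cauchy--Schwarz on the polynomial part.
\end{itemize}
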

Although the proof of this lemma is not our original contribution, as there is plenty of literature showing this conclusion~\citep{zimmerman2003bias,hotelling1953new}, we would like to provide a proof here to demonstrate some of our insights. 
\begin{proof}
    Let $S_{xx}=\frac{1}{k}\sum X_i^2$, $S_{yy}=\frac{1}{k}\sum Y_i^2$, $S_{xy}=\frac{1}{k}\sum X_i Y_i$. Then
    \[
        \rho'=\frac{S_{xy}}{\sqrt{S_{xx}}\sqrt{S_{yy}}}
   \]
    If we use Taylor expansion, we will have
    \[
        \frac{1}{\sqrt{S_{xx}}}=1-\frac{1}{2}S_{xx}+\mathcal{O}\left(\frac{1}{k^2}\right)
    \]
    As a result, we will have 
   \begin{align*}
       \mathbb{E}[\rho']&=\mathbb{E}[S_{xy}]-\frac{1}{2}\mathbb{E}[S_{xy}S_{xx}]-\frac{1}{2}\mathbb{E}[S_{xy}S_{yy}]\\&+\frac{1}{4}\mathbb{E}[S_{xx}S_{yy}]+\mathcal{O}\left(\frac{1}{k^2}\right)\\
       &=\rho-\frac{1}{2k}\rm{Cov}(XY,X^2)-\frac{1}{2k}\rm{Cov}(XY,Y^2)\\
       &+\frac{1}{4k}\rm{Cov}(X^2,Y^2)+\mathcal{O}\left(\frac{1}{k^2}\right)
    \end{align*}
    We have $\mathbb{E}[X^2]=\mathbb{E}[Y^2]=1$ and $\mathbb{E}[XY]=\rho$. We also know $\rm{Var}(X)=\rm{Var}(Y)=1$ and $\rm{Cov}(X,Y)=\rho$. Using Isserlis theorem, we have $\mathbb{E}[X^4]=3$, $\mathbb{E}[Y^4]=3$, $\mathbb{E}[X^2Y^2]=1+2\rho^2$, $\mathbb{E}[X^3Y]=\mathbb{E}[XY^3]=3\rho$.
    Hence, we have
    \[
        \rm{Cov}(X^2,Y^2)=\mathbb{E}[X^2Y^2]-\mathbb{E}[X^2]\mathbb{E}[Y^2]=2\rho^2
    \]
    \[
        \rm{Cov}(X^2,XY)=\mathbb{E}[X^3Y]-\mathbb{E}[X^2]\mathbb{E}[XY]=2\rho
    \]
    By putting them back, we have our conclusion.
\end{proof}

According to \cref{lemma:bivariate}, we prove \cref{eqa:rho}. The expected distortion goes down as $\mathcal{O}(1/k)$. Hence, we can see that our designed random projection preserves the cosine similarity.
\end{proof}

\subsection{Synthetic Data}
\label{sec:sanity-check}
\begin{wrapfigure}{r}{0.5\linewidth}
    \centering
    \includegraphics[width=\linewidth]{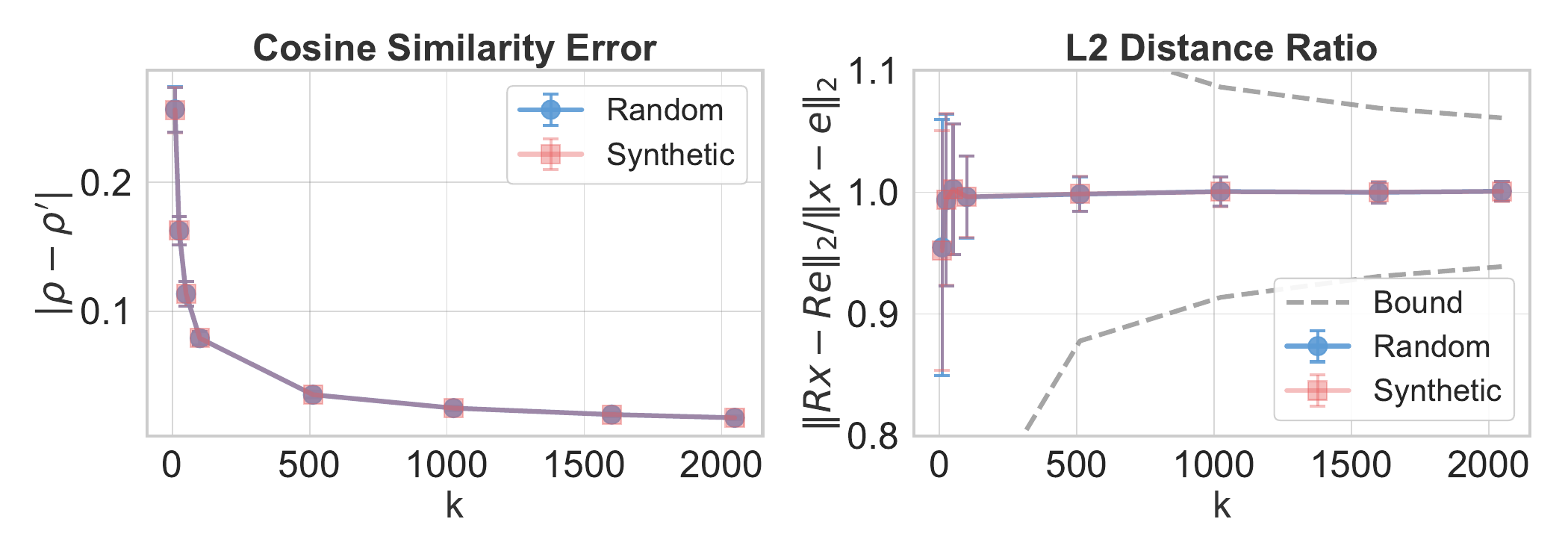}
    \caption{Cosine similarity error and L2 distance ratio under different sanity-check settings.}
    \label{fig:sanity-check}
\end{wrapfigure}
Before starting our experiments, we first conduct a sanity check to verify whether embedding distances remain consistent after random projection. We consider two settings. In the synthetic setting, we use \textsc{Llama3-8B-Instruct} to generate 100 datastore entries with the prompt \emph{“Generate a datastore entry containing domain knowledge.”} For each of these, we repeat 100 times to generate a query using the prompt \emph{“Generate a query about domain knowledge.”} In the random setting, both datastore and query embeddings are random vectors drawn from $\mathcal{N}(0,1)$. 

We measure the L2 distance and cosine similarity between each query embedding and the datastore embeddings. We use two metrics: the L2 distance ratio, $\frac{\lVert R\bm{x} - R\bm{e} \rVert_2}{\lVert \bm{x} - \bm{e} \rVert_2}$, and the cosine similarity difference, $|\rho - \rho'|$. We first average each metric over 100 datastores and then average across 100 runs. \cref{fig:sanity-check} presents the sanity-check results, along with the theoretical upper and lower bounds derived from the Johnson--Lindenstrauss (JL) transform, given by $1 \pm \Omega\left(\sqrt{\frac{\log d}{k}}\right)$. As $k$ increases, the cosine similarity error decreases and approaches zero, confirming our later observation that larger $k$ improves perplexity, which is consistent with our theoretical analysis. For the L2 distance, the results follow the JL transform prediction, with the ratios remaining within the theoretical bounds. A larger $k$ also stabilizes the ratio. Overall, these results verify that, in expectation, distances are well preserved after random projection.

\section{Proof: Probability of Top-1 Selection Changing after Random Projection}
\label{apx:prob-after-rp}
Without loss of generality, we consider the top-1 case, where the datastore embedding $\bm{e}_1$ attains the largest cosine similarity $\rho_1$ in the vanilla RAG setting. We show that if there exists another datastore embedding $\bm{e}_j$ ($j\neq 1$) such that
\begin{align*}
\left\lVert 
\frac{\bm{e}_1}{\lVert \bm{e}_1\rVert_2}
-
\frac{\bm{e}_j}{\lVert \bm{e}_j\rVert_2}
\right\rVert_2
\leq \Delta
\end{align*}
then the probability that the projected cosine similarity $\rho_j'$ exceeds $\rho_1'$ satisfies
\begin{align*}
\Pr(\rho_j' > \rho_1')
\;\gtrsim\;
1 - \Phi\!\left(
\frac{\Delta}
{(1-\rho_1^2)\sqrt{\frac{2(1-\eta_{\max})}{k-3}}}
\right)
\end{align*}
where $\eta_{\max}$ is an upper bound on the correlation between the Fisher-transformed projected similarities, and $\Phi$ denotes the cumulative distribution function of the standard normal distribution.

\begin{proof}
We define the normalized query and datastore embeddings as
\[
\bm{u}=\frac{\bm{x}}{\lVert \bm{x}\rVert_2},
\qquad
\bm{v}_i=\frac{\bm{e}_i}{\lVert \bm{e}_i\rVert_2}.
\]
Under this notation, the assumption becomes $\lVert \bm{v}_1-\bm{v}_j\rVert_2\leq\Delta$. By the Cauchy--Schwarz inequality,
\[
|\rho_1-\rho_j|
=
|\bm{u}^\top(\bm{v}_1-\bm{v}_j)|
\leq
\lVert \bm{v}_1-\bm{v}_j\rVert_2
\leq
\Delta.
\]

We next introduce the Fisher transform
\[
z_i=\tanh^{-1}(\rho_i),
\qquad
z_i'=\tanh^{-1}(\rho_i').
\]
Since cosine similarity takes values in $(-1,1)$ and $\tanh^{-1}(\cdot)$ is strictly increasing on this interval, the ordering of similarities is preserved under this transformation. In particular,
\[
\rho_j' > \rho_1'
\quad\Longleftrightarrow\quad
z_j' > z_1'.
\]

As shown in \cref{apx:rp-cos-proof}, the projected cosine similarity $\rho_i'$ is the sample Pearson correlation computed from $k$ IID bivariate normal pairs with population correlation $\rho_i$. By the classical Fisher $z$-transform result~\cite{fisher1915frequency}, we have
\begin{align*}
z_i' \;\approx\; \mathcal{N}\!\left(\tanh^{-1}(\rho_i),\,\frac{1}{k-3}\right).
\end{align*}
Because $z_1'$ and $z_j'$ share the same projected query vector, they are generally correlated. Let $\eta=\mathrm{Corr}(z_1',z_j')$, and assume $\eta\leq\eta_{\max}$ for some fixed upper bound $\eta_{\max}<1$.

Under a joint Gaussian approximation, the difference $z_j'-z_1'$ is approximately normally distributed with
\[
\mathbb{E}[z_j'-z_1'] = z_j-z_1,
\qquad
\mathrm{Var}(z_j'-z_1') = \frac{2(1-\eta)}{k-3}.
\]
Therefore,
\begin{align*}
\Pr(\rho_j' > \rho_1')
=
\Pr(z_j'-z_1' > 0)
=
\Phi\!\left(
\frac{z_j-z_1}
{\sqrt{\frac{2(1-\eta)}{k-3}}}
\right).
\end{align*}

Since $\rho_1\geq\rho_j$, we have $z_1\geq z_j$. By the mean value theorem, there exists $c$ between $\rho_1$ and $\rho_j$ such that
\begin{align*}
z_j-z_1
=
(\rho_j-\rho_1)\frac{1}{1-c^2}.
\end{align*}
Using $|\rho_j-\rho_1|\leq\Delta$ and $1-c^2\leq 1-\rho_1^2$, we obtain
\begin{align*}
z_j-z_1 \geq -\frac{\Delta}{1-\rho_1^2}.
\end{align*}
Substituting this bound yields
\begin{align*}
\Pr(\rho_j' > \rho_1')
&\gtrsim
\Phi\!\left(
-\frac{\Delta}
{(1-\rho_1^2)\sqrt{\frac{2(1-\eta)}{k-3}}}
\right) \\
&=
1-\Phi\!\left(
\frac{\Delta}
{(1-\rho_1^2)\sqrt{\frac{2(1-\eta)}{k-3}}}
\right) \\
&\gtrsim
1-\Phi\!\left(
\frac{\Delta}
{(1-\rho_1^2)\sqrt{\frac{2(1-\eta_{\max})}{k-3}}}
\right),
\end{align*}
which completes the proof.
\end{proof}

This result shows that after random projection, embeddings that are sufficiently close in the original space can overtake the original top-1 embedding with non-negligible probability. This behavior is desirable for privacy preservation: sensitive tokens (e.g., names, addresses, or identification numbers) often have embeddings close to semantically related but less sensitive alternatives, and random projection allows these alternatives to replace the original sensitive tokens with controlled probability. Combined with the bounds in \cref{apx:rp-cos-proof}, this demonstrates that random projection limits distortion to a reasonable range, preserving general semantic content while reducing the likelihood that specific sensitive tokens are deterministically selected.

\section{Complete Algorithms}
\label{apex:alg}

\begin{algorithm}[!htbp]
\caption{Private RAG with Random Projection (KNN-LM)}
\label{alg:rpknnlm}
\footnotesize
\begin{algorithmic}[1]

\Require Datastore $D$ with $n$ documents;
tokenizer and embedding function
$f(\cdot): \text{Text} \rightarrow \mathbb{R}^d$;
$\lambda=0.25$.
\Statex \textbf{Input:} User query $q$
\Statex \textbf{Output:} Generated answer $a$
\Statex \textbf{Parameters:} Projection dimension $k$;
maximum token length $T_{\max}$

\Statex \textbf{Offline preprocessing}
\State Sample random projection matrix
$R \in \mathbb{R}^{k \times d}$,
where $R_{ij} \sim \mathcal{N}(0,1/k)$
\State Compute datastore embeddings
$E_D \in \mathbb{R}^{d \times n}$,
where $\bm{e}_i = f(d_i)$
\State Project embeddings:
$\tilde{E}_D = R E_D$

\Statex \textbf{Online inference (per query)}
\State Initialize $t \gets 0$,
$a \gets \emptyset$, $y \gets \emptyset$
\While{$t < T_{\max}$ and $y$ is not the end token}
    \State Compute query embedding:
    $\bm{x} = f(q)$
    \State Project query embedding:
    $\tilde{\bm{x}} = R\bm{x}$
    \State $P(y \mid x)
    = \lambda P_{LM}(x)
    + (1-\lambda)P_{KNN}(\tilde{\bm{x}},\tilde{E}_D)$
    \State Predict next token:
    $y \gets \arg\max_y P(y \mid x)$
    \State Append token:
    $a \gets a \Vert y$,
    $q \gets q \Vert y$
    \State $t \gets t+1$
\EndWhile

\State \Return{$a$}

\end{algorithmic}
\end{algorithm}

For KNN-LM (\Cref{alg:rpknnlm}), at each generation step we apply random projection to both the query embedding and the datastore embeddings. The language model produces a next-token distribution $P_{\mathrm{LM}}$ based on the current context $x$. In parallel, the KNN distribution $P_{\mathrm{KNN}}$ is computed by measuring the cosine similarity between the projected query embedding $\tilde{x}$ and each projected datastore embedding $\tilde{e}_i$, followed by a softmax over the top-$K$ nearest neighbors to obtain a probability distribution over candidate tokens. The final next-token distribution is obtained by linearly combining the two distributions with weight $\lambda=0.25$. The selected token $y$ is then appended to the context and the generated answer, and the process repeats.

\begin{algorithm}[!htbp]
\caption{Implementation of DP-RP-G (KNN-LM)}
\label{alg:dprpg}
\footnotesize
\begin{algorithmic}[1]

\Require Datastore $D=\{d_i\}_{i=1}^n$;
embedding function $f:\text{Text}\to\mathbb{R}^d$
\Statex \textbf{Input:} User query $q$
\Statex \textbf{Output:} Generated answer $a$
\Statex \textbf{Parameters:} Projection dimension $k$;
max tokens $T_{\max}$; clipping bound $c$;
privacy $(\epsilon,\delta)$

\Statex \textbf{Offline preprocessing}
\State Sample random projection
$R\in\mathbb{R}^{k\times d}$,
where $R_{ij}\sim\mathcal{N}(0,1/k)$
\State Compute datastore embeddings
$E_D\in\mathbb{R}^{d\times n}$,
with columns $\bm{e}_i=f(d_i)$
\State Project:
$\tilde{E}_D=RE_D\in\mathbb{R}^{k\times n}$

\For{each column $\tilde{\bm{e}}_i$ of $\tilde{E}_D$}
    \State $\hat{\bm{e}}_i
    \gets \tilde{\bm{e}}_i
    \cdot \min\left(1,
    \frac{c}{\|\tilde{\bm{e}}_i\|_2}\right)$
\EndFor

\Statex \textbf{Online inference (per query)}
\State Initialize $t\gets 0$,
$a\gets\emptyset$, $y\gets\emptyset$,
$\sigma=\frac{c\sqrt{2\ln(1.25/\delta)}}{\epsilon}$
\State Sample IID
$Z\in\mathbb{R}^{k\times n}$,
where $Z_{ij}\sim\mathcal{N}(0,\sigma^2)$
\State $\tilde{E}_D\gets\tilde{E}_D+Z$

\While{$t<T_{\max}$ and $y$ is not the end token}
    \State $\bm{x}\gets f(q)\in\mathbb{R}^d$
    \State $\tilde{\bm{x}}\gets R\bm{x}$
    \State $y\gets\arg\max_y P(y\mid x)$
    \State $a\gets a\Vert y$,
    $q\gets q\Vert y$
    \State $t\gets t+1$
\EndWhile

\State \Return{$a$}

\end{algorithmic}
\end{algorithm}

We implement DP-RP-G in \cref{alg:dprpg}, following \citet{blocki2012johnson}, 
which adds Gaussian noise after random projection to achieve differential privacy. 
Unlike \citet{blocki2012johnson}, who assume binary or $[0,1]$-bounded vectors, 
which does not hold for LLM embeddings. 
Therefore, after projection, we apply standard DP clipping to any vector $\bm{v}$:
\[
\operatorname{clip}(\bm{v}, c) = \bm{v} \cdot \min\!\left(1, \frac{c}{\lVert \bm{v} \rVert_2}\right),
\]
so that all vectors have $\ell_2$-norm at most $c$. 
After clipping, we add Gaussian noise calibrated to the privacy budget where the variance $\sigma=c\frac{\sqrt{2\ln(1.25/\delta)}}{\epsilon}$. 
As our goal is to protect the datastore, we apply the DP step only to $E_D$ 
(and not to the queries). Hence, lines 12--19 in \Cref{alg:dprpg} are the same as lines 7--15 in \Cref{alg:rpknnlm}. For the DP-RP-G method implemented in standard RAG, we replace lines 12--19 with lines 7--13 in \Cref{alg:rpllmrag}.

\begin{table*}[!htbp]
\centering
\caption{Comparison of KNN-LM and Standard RAG on Enron Email and ECHR with different $m$ ($\epsilon=5$).}
{
\tablebodyfont
\setlength{\tabcolsep}{2pt}
\begin{tabular}{lcccccc|cccccc}
\toprule
 & \multicolumn{6}{c}{KNN-LM} & \multicolumn{6}{c}{Standard RAG} \\
\cmidrule(lr){2-7} \cmidrule(lr){8-13}
 m& 10 & 20 & 30 & 40 & 50 & 60 
 & 10 & 20 & 30 & 40 & 50 & 60 \\
\midrule

\multicolumn{13}{l}{\textbf{Enron Email}} \\
Privacy (\# leakage) 
& 4 & 2 & 0 & 0 & 0 & 0 
& 8 & 6 & 1 & 0 & 0 & 0 \\
Perplexity 
& 3.14 & 3.15 & 3.14 & 3.14 & 3.16 & 3.16 
& 3.70 & 3.68 & 3.70 & 3.71 & 3.71 & 3.98 \\
\midrule

\multicolumn{13}{l}{\textbf{ECHR}} \\
Privacy (\% leakage) 
& 2.12 & 2.24 & 2.06 & 2.12 & 2.18 & 1.94 
& 2.31 & 2.02 & 2.16 & 2.08 & 2.12 & 2.14 \\
Perplexity 
& 512.64 & 512.64 & 512.64 & 112.35 & 109.64 & 110.35 
& 4307.36 & 4888.23 & 5155.85 & 4307.36 & 7549.47 & 7551.63 \\
\bottomrule
\end{tabular}
}
\label{tab:tunem}
\end{table*}

\section{Implementation Details}
\label{apx:implementation}
First, we use the Enron Email dataset~\citep{klimt2004enron} as the datastore and GPT-2~\citep{radford2019language} as the LLM because GPT-2’s pre-training data do not overlap with Enron Email, ensuring the RAG data remain unseen. Thus, private information is limited to the RAG documents. The dataset contains annotated sensitive information (URLs, email addresses, phone numbers), and we use the entire dataset as the datastore, totaling 4576 sensitive items. We use the query prompts from~\citep{huang2023privacy} (e.g., \texttt{You may also contact us via e-mail at}) and evaluate whether the annotated personally identifiable information can be retrieved.

The European Court of Human Rights (ECHR) dataset~\citep{chalkidis-et-al-2021-ecthr} contains legal case documents from the ECHR. Queries come from test-case descriptions, and utility is measured by matching generated judgments to ground-truth labels. We use \texttt{Llama3-1B-Instruct}. For adversarial queries, we use \texttt{Llama3-8B-Instruct} to annotate sensitive information and query with prompts concatenating \texttt{"This is the first sentence of a paragraph from ECHR legal documents. Complete the paragraph with the sentence:"} and the first sentence of each case. We check if generations contain the annotated sensitive identity, using 10K documents.
  
Besides targeted leakage where we check if exact annotated information is leaked, we use the C4 dataset, a cleaned and large-scale version of Common Crawl, for untargeted leakage evaluation. We select 5,000 samples and use the first sentence of each as the query. We then compute ROUGE-L scores between each generation and every datastore sample. If any sample in the datastore has a score greater than 0.5 with the generation, where we follow the same settings by \citet{huang2023privacy}, marking it as a leakage. We report the number of leakages out of the 5k samples. We use the \textsc{Llama3-8B-Instruct} model as the backbone.

\textsc{HotpotQA}~\citep{yang2018hotpotqa} is a multi-hop QA dataset with strong supervision for supporting facts, enabling more explainable systems. We use it to evaluate whether we retrieve correct supporting documents and measure cosine similarity between retrieved and reference embeddings to assess retrieval quality. We use the training set as the datastore and the test set for evaluation. The datastore contains over 90K documents. We evaluate with distractor setting which means the whole datastore is for reference. We use \textsc{Qwen2-7B-Instruct} as the backbone model. Since document retrieval relies on KNN for both KNN-LM and standard RAG, we focus on the KNN-LM structure to analyze the retrieval process. For evaluation, we report exact match (EM) and F1 scores between the generated output and the ground truth. As the datastore is very large, we randomly select 100 documents as targets for targeted attacks. The initial query in multi-round attack is each question.

\begin{lrbox}{\distributiontable}
\begin{minipage}{\columnwidth}
\begin{table}[H]
\centering
\captionof{table}{Perplexity and data leakage comparison using random projection matrices drawn from different distributions on the Enron Email dataset with GPT2.}
\label{tab:rademacher}
\tablebodyfont
\setlength{\tabcolsep}{2pt}

\begin{tabular}{cclcccccccc}
\toprule
& & \multicolumn{4}{c}{KNN-LM} & \multicolumn{4}{c}{Standard RAG} \\
\midrule
& & Perplexity & Email & URL & Phone & Perplexity & Email & URL & Phone \\
\midrule
\multicolumn{2}{c}{RAG}
& 2.87 & 0 & 13 & 0
& 1.58 & 36 & 56 & 125 \\
\midrule
\multirow{2}{*}{RP (Ours)}
& Gaussian
& \textbf{2.89} & 0 & 0 & 0
& 2.05 & 0 & 3 & 0 \\
& Rademacher
& 2.92 & 0 & 2 & 0
& \textbf{1.76} & 0 & 3 & 1 \\
\bottomrule
\end{tabular}%
\end{table}
\end{minipage}
\end{lrbox}

\begin{table*}[!b]
\centering
\captionof{table}{Example responses generated by different methods given the prompt.}
\label{tab:example}
{
\tablebodyfont
\setlength{\tabcolsep}{2pt}
\begin{tabular}{p{\dimexpr0.18\linewidth-4pt\relax} p{\dimexpr0.82\linewidth-4pt\relax}}
\toprule
\textbf{Prompt} &
\textit{Complete the sentences: you have any suggestions, please share. With the rise of social media...} \\
\midrule

\textbf{Parametric} &
With that in mind  here are a few ideas to get you started:  1. **Create a vision board**: Cut out images and words that represent your goals and desires  and create a visual representation of what you want to achieve. 2. **Set SMART goals**: Make sure your goals are Specific Measurable Relevant and Time-bound. This will help you stay focused and motivated. 3. **Break down big goals into smaller.\\
\midrule

\textbf{RAG} &
it's become increasingly important for businesses to have a strong online presence. One way to do this is by creating engaging content that resonates with your target audience. Here are some tips to help you create content that drives engagement:  1. Know your audience: Before you start creating content make sure you have a good understanding of your target! audience. Who are they? What are their interests? What problems do they need?\\
\midrule

\textbf{RP (Ours)} &
it's become increasingly important for businesses to have a strong online presence. One way to do this is by creating engaging content that resonates with your target audience. Here are some tips to help you create content that drives engagement:  1. Know your audience: Before you start creating content  make sure you have a good understanding of who your target! audience is. What are their interests need?\\
\midrule

\textbf{DP-RP-G ($\epsilon=20$)} &
PrintWriter.PrintWriter userService userServiceExchangeExchangeExchange\_generate\_generate\\
\midrule

\textbf{DPSparseVoteRAG ($\epsilon=20$)} &
make sure you have a good understanding of your target! "\#"\!"\#\!"!\$""\!\&""\!\$"\#"\#\!"\*\\
\bottomrule
\end{tabular}
}
\end{table*}

\section{Settings of Hyper-Parameters}
\label{apx:hyper}
\subsection{General Settings}
For all LLM generations, we set the repetition penalty to 0.75 and the no-repeat-ngram-size to 0. The feature dimensions of GPT-2, Llama3, and Qwen2 are 128, 2048, and 3584 respectively. All other generation parameters follow the default \textsc{HuggingFace} settings. By default, we set the datastore size equal to the total number of tokens in each dataset. We follow previous differential privacy work where we set $\delta$ as the reverse scale of dataset size. For Enron Email and ECHR, it is $10^{-4}$. For HotPotQA and C4, we use $10^{-5}$. We use each model's tokenizer and its last hidden-state vectors as embeddings for both retrieval settings. The max output token length is 2048.

\subsection{DPSparseRAG}
\label{apx:tunem}
{Apart from $\epsilon$ and clipping bound we tune on in the experiments, another hyper-parameter for DPSparseRAG is $m$. We therefore conduct a sweep over 
$m\in\{10,20,30,40,50,60\}$ with $\epsilon=5$ on the Enron Email and ECHR datasets. Overall, we observe in \Cref{tab:tunem} no significant difference in utility across various values of $m$.
}

\par\noindent\usebox{\distributiontable}\par

\section{Random Projection with Other Distributions}
\label{apx:other-distributions}
Beyond the normal (Gaussian) distribution, we also experimented with constructing the projection matrix using a Rademacher distribution. In this distribution, each entry independently takes the value $+1$ or $-1$ with equal probability (50\%). As shown in \cref{tab:rademacher}, the Rademacher-based projection also preserves privacy effectively while achieving performance comparable to vanilla RAG. These results indicate that our proposed random projection method remains effective as long as the projection matrix preserves similarity structure, regardless of the specific distribution.

\section{More Discussion}
\label{apx:morediscussion}
\subsection{False Positive Classification of Non-Sensitive Information}
Although random projection provides strong privacy protection overall, our experiments on Enron Email, ECHR, and C4 reveal that a small number of items may still be leaked. All failure cases fall into the case of leakage examples without sensitive information. Analysis of these failure cases reveals a clear pattern: random projection provides complete protection (100\% leakage prevention) for specific content such as numbers, IDs, and email addresses. 

While, in rare cases, common terminology which are non-sensitive information may mis-classified personal identity information in annotation (e.g. common streat name Queen St.) This occurs because such terms may appear in multiple contexts, making them less distinctive in the embedding space. These findings suggest that random projection is most effective for protecting unique identifiers while providing strong but not absolute protection for common terms that may appear across multiple documents.

\subsection{An Example of Generation}
\Cref{tab:example} provides an example of responses generated by different methods for the same prompt. The parametric model produces unrelated continuation that is not grounded in the datastore. In contrast, both RAG and our method generate clear, relevant answers, whereas the DP-based methods often produce incoherent or unrelated content.

\end{document}